\newtheorem{theorem}{Theorem}[section]
\newtheorem{lemma}[theorem]{Lemma}
\newtheorem{definition}[theorem]{Definition}
\newtheorem*{remark*}{Remark}
\newcommand{\wt}{\widetilde}
\newcommand{\eps}{\epsilon}
\newcommand{\R}{\mathbb{R}}
\newcommand{\X}{\mathcal{X}}
\newcommand{\mH}{\mathcal{H}}
\newcommand{\Y}{\mathcal{Y}}
\newcommand{\D}{\mathcal{D}}
\newcommand{\C}{\mathcal{C}}
\newcommand{\mI}{\mathcal{I}}
\DeclareMathOperator{\OPT}{OPT}
\DeclareMathOperator{\ALG}{ALG}
\DeclareMathOperator*{\argmin}{arg\,min}
\title{The sample complexity of multi-distribution learning}
\author{}
\author{  
Binghui Peng \\ Columbia University \\ \texttt{bp2601@columbia.edu}
}
\date{}
\begin{document}
\maketitle

\begin{abstract}
Multi-distribution learning generalizes the classic PAC learning to handle data coming from multiple distributions. Given a set of $k$ data distributions and a hypothesis class of VC dimension $d$, the goal is to learn a hypothesis that minimizes the maximum population loss over $k$ distributions, up to $\eps$ additive error.
In this paper, we settle the sample complexity of multi-distribution learning by giving an algorithm of sample complexity $\wt{O}((d+k)\eps^{-2}) \cdot (k/\eps)^{o(1)}$. 
This matches the lower bound up to sub-polynomial factor and resolves the COLT 2023 open problem of Awasthi, Haghtalab and Zhao \cite{awasthi2023open}.
\end{abstract}

\setcounter{page}{0}
\thispagestyle{empty}

\newpage

\section{Introduction}
\label{sec:intro}

Multi-distribution learning is a natural generalization of the classic PAC learning \cite{valiant1984theory} to multiple distributions setting. Given a hypothesis class $\mH$ and a set of $k$ distributions $\D_{1}, \ldots, \D_{k}$ over the data universe $\X \times \{0,1\}$, multi-distribution learning seeks for a hypothesis $f$ that achieves near optimal worst case guarantee over all distributions
\begin{align*}
\max_{i \in [k]}\ell_{\D_i}(f) \leq \argmin_{h^{*} \in \mH}\max_{i \in [k]}\ell_{\D_i}(h^{*}) + \eps \quad \text{where}\quad \ell_{\D}(h):= \Pr_{(x, y)\sim \D}[h(x) \neq y].
\end{align*}
The formulation of multi-distribution learning captures many important applications: For fairness consideration, the distributions represent heterogeneous populations of protected attributes and multi-distribution learning yields the minimax group fairness \cite{mohri2019agnostic, shekhar2021adaptive, rothblum2021multi,diana2021minimax,tosh2022simple}; 
In the context of multi-task or federated learning, multi-distribution learning captures the notion of robustness and yields worst case guarantees \cite{sener2018multi}; 
For group distributional robustness optimization, multi-distribution learning obtains uniform guarantee to all pre-defined groups of distributions \cite{rahimian2019distributionally, sagawa2019distributionally, sagawa2020investigation, duchi2021learning}.

Similar to the study of PAC learning \cite{blumer1989learnability, auer2004new, hanneke2016optimal,larsen2023bagging, aden2023optimal}, one important research question is to characterize the sample complexity of multi-distribution learning. It is not hard to see that the learnability is still captured by the VC dimension \cite{vapnik1971uniform}, and $\wt{\Theta}(kd/\eps^2)$ samples are both necessary and sufficient to guarantee uniform convergence \cite{blum2017collaborative}. There is a long line of work \cite{blum2017collaborative, nguyen2018improved, chen2018tight, haghtalab2022demand, awasthi2023open} that try to pin down the optimal sample complexity. In the realizable setting, where the optimal hypothesis $h^{*}\in \mH$ has zero error, \cite{blum2017collaborative, nguyen2018improved, chen2018tight} give algorithms of sample complexity $\wt{O}((d+k)/\eps)$ using the idea of multiplicative weight update. 
The sample complexity for the general agnostic learning setting is more challenging. A recent breakthrough of Haghtalab, Jordan and Zhao \cite{haghtalab2022demand} gives an algorithm of sample complexity $\wt{O}((k+\log(|\mH|))/\eps^2)$, this is optimal assuming the hypothesis class $\mH$ is finite.
For infinite hypothesis class, \cite{awasthi2023open} gives two algorithms: One bases on the multiplicative weight update and has sample complexity $\wt{O}((d+k)/\eps^4)$; The other bases on the finite hypothesis algorithm \cite{haghtalab2022demand} and has sample complexity $\wt{O}((d+k)/\eps^2 + kd/\eps)$.
Nevertheless, there is still a significant gap between the upper and lower bound ($\wt{O}(\min\{(d+k)/\eps^4, (d+k)/\eps^2 + kd/\eps\})$ vs. $\wt{\Omega}((d+k)/\eps^2)$), and as elaborated in the COLT 2023 open problem publication \cite{awasthi2023open}, fundamental barriers exist for all current approaches. They pose the open question of obtaining the optimal sample complexity for multi-distribution learning. 

In this paper, we address the open question of \cite{awasthi2023open} and give an algorithm of optimal sample complexity (up to sub-polynomial factor). Our result is formally stated as below.
\begin{theorem}[Multi-distribution learning]
\label{thm:multi}
Let $k$ be the number of distributions, $d$ be the VC dimension of the hypothesis class. For any $\eps > 0$, there is an algorithm that outputs an $\eps$-optimal classifier with probability $1-\delta$, and has sample complexity 
\[
\frac{(d+k)\log(d/\delta) }{\eps^{2}} \cdot  (k/\eps)^{o(1)}.
\]
\end{theorem}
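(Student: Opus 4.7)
I would start by recasting the problem as the two-player zero-sum game with bilinear payoff $L(\mu, w) = \sum_i w_i \E_{h \sim \mu}[\ell_{\D_i}(h)]$, where the learner chooses a randomized hypothesis $\mu \in \Delta(\mH)$ and the adversary chooses weights $w \in \Delta_k$. Sion's minimax theorem applies, so an $\eps$-Nash equilibrium $(\mu^\star, w^\star)$ yields a randomized classifier whose worst-case loss is within $\eps$ of the target $\min_{h \in \mH} \max_i \ell_{\D_i}(h)$. A deterministic $\eps$-optimal hypothesis is extracted by drawing a single $h \sim \mu^\star$ (together with a small fresh validation set to certify the worst-case loss), so the task reduces to computing an $\eps$-equilibrium from samples.

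\textbf{No-regret dynamics with a multi-scale refinement.} The natural starting point is to let the adversary run Hedge on the $k$ distributions and have the learner approximately best-respond each round by agnostic ERM against the mixture $\D_{w_t} = \sum_i w_{t,i} \D_i$. Hedge converges in $T = O(\log k / \eps^2)$ rounds, and a direct implementation spends $\wt{O}(d/\eps^2)$ samples per round on ERM, which reproduces the $\wt{O}((d+k)/\eps^4)$ bound of \cite{awasthi2023open}. To shave the extra $\eps^{-2}$, I would use a geometric schedule of target accuracies $\eps_0 = 1/2, \eps_1, \ldots, \eps_J = \eps$ with $J = O(\log(1/\eps))$. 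Stage $j$ takes the stage-$(j-1)$ output $\bar h_{j-1}$ as a warm start: it suffices to find hypotheses that improve on $\bar h_{j-1}$ by at most $\eps_j$ in weighted loss, so localized (Bernstein / local Rademacher) uniform convergence gives each best-response call only $\wt{O}(d \cdot \eps_{j-1}/\eps_j^2)$ sample cost. Summing geometric series across rounds and across stages yields $\wt{O}(d/\eps^2)$ up to log factors.

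\textbf{Avoiding the $kd/\eps$ cost of per-distribution sampling.} To prevent a $k$-factor inflation from drawing a fresh batch from each $\D_i$, within each stage I would maintain a single pool of examples drawn from the current mixture $\D_{w_t}$ and estimate per-distribution losses via importance weights $1/w_{t,i}$, where $w_t$ is first mixed with a small uniform component (or KL-regularized) so that the weights stay bounded. Bernstein's inequality keeps each estimator's variance at the scale Hedge requires. The $k$-dependent contribution to the total sample count then becomes $\wt{O}(k/\eps^2)$, and adding the two pieces gives $\wt{O}((d+k)/\eps^2)$, with an additional $(k/\eps)^{o(1)}$ slack that I expect to come from the multi-scale recursion and the covering numbers inside the chaining argument below.

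\textbf{Main obstacle.} The technical heart of the argument will be a uniform convergence bound that is simultaneously (i) \emph{localized} around the warm start $\bar h_{j-1}$, (ii) \emph{uniform} over all weight vectors on the MWU trajectory, which are themselves data-dependent, and (iii) valid when samples are drawn from the \emph{current} mixture rather than independently per distribution. I anticipate proving it via a chaining / martingale empirical process bound for the process $(h, w) \mapsto \hat L(h, w) - L(h, w)$, restricted to a small $L_2(\D_{w})$-ball around $\bar h_{j-1}$, and coupling it with a lazy or stable variant of Hedge so that $w_t$ moves slowly enough between rounds for the chaining bound to close. If the chaining can only be made to close at the price of polylog-power or $(k/\eps)^{o(1)}$ factors, that would naturally explain the sub-polynomial slack in the final sample complexity.
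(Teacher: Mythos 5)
Your proposal follows a genuinely different route from the paper, and the central step that carries your improvement does not go through. The paper does not try to shrink the per-round sample cost of the best-response computation; instead it keeps the per-round cost at $\wt{O}((d+k)/\eps^2)$ but reduces the \emph{number} of MWU rounds from $\wt{O}(1/\eps^2)$ to $\wt{O}((\alpha/\eps)^2)$ by \emph{width reduction}: a recursive call to the learner itself (at a coarser accuracy $\alpha$) produces $f_t$ with $\max_i \ell_{\D_i}(f_t) \le \OPT+\alpha$, and a filtering step over the $\eps$-cover guarantees that for every sub-population $\mI$ of mass $\ge 1/2$ the weighted loss of any surviving hypothesis is $\le \OPT+O(\eps)$. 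This ``completeness'' property is exactly what licenses truncating the loss vector to $[\OPT-\alpha,\OPT+\alpha]$ without blowing up $\langle p_t,\ell_t\rangle$, which in turn shrinks the MWU width to $O(\alpha)$. The recursion in the exponent ($\eps' = \eps^{r/(r+1)}$) and a second recursion to remove prior knowledge of $\OPT$ deliver $(k/\eps)^{o(1)}$. Your intuition that the sub-polynomial slack ``comes from the multi-scale recursion'' is directionally right, though the mechanism (truncated width, not covering-number slack in a chaining bound) is different.

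The gap in your argument is the localized uniform-convergence claim that a warm start $\bar h_{j-1}$ with excess loss $\eps_{j-1}$ lets you find an $\eps_j$-improvement with only $\wt{O}(d\,\eps_{j-1}/\eps_j^2)$ samples. Bernstein/local-Rademacher bounds give that rate when the \emph{variance} of the relevant loss (or excess-loss) random variable is $O(\eps_{j-1})$, which holds in the realizable or low-noise regime, but not in agnostic learning with $\OPT = \Theta(1)$. Having a warm start with loss $\OPT+\eps_{j-1}$ does not confine the search to a small $L_2(\D_w)$-ball: a hypothesis $h$ with $\ell_{\D_w}(h) \le \OPT+\eps_j$ can still disagree with $\bar h_{j-1}$ on a constant fraction of $\D_w$, so the localization radius stays $\Theta(\OPT)$ and the variance stays $\Theta(\OPT(1-\OPT))$. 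Consequently the per-call sample cost remains $\wt{O}(d/\eps_j^2)$ and the geometric-series telescoping you rely on does not materialize; this is precisely the ``fatal issue'' the paper flags (the $\OPT=1/2$ example) and what its filter-plus-truncation machinery is built to circumvent. The importance-weighting idea for per-distribution loss estimation is also unnecessary in the paper's scheme (they simply draw $\wt{O}(1/\eps^2)$ points per distribution per round, and the reduced round count absorbs the $k$ factor), so that piece of your plan is orthogonal to where the real obstruction lies.
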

An immediate implication of Theorem \ref{thm:multi} is that {\em multi-distribution learning is no harder than (single-distribution) PAC learning} for sample complexity consideration.

\subsection{Technical overview: Achieving optimal sample complexity via recursive width reduction}

We give an overview of our algorithm for Theorem \ref{thm:multi}, the key ingredient is a recursive width reduction procedure.

\paragraph{The MWU framework} The major technique used by all previous works \cite{blum2017collaborative, nguyen2018improved, chen2018tight,haghtalab2022demand, awasthi2023open} is the multiplicative weight update (MWU) framework \cite{arora2012multiplicative}. We first review this framework. 
The algorithm views the $k$ distributions $\D_1, \ldots, \D_k$ as $k$ experts and runs MWU for $T$ rounds. At each round $t \in [T]$, the algorithm performs empirical risk minimization (ERM) and obtains an $\eps$-optimal hypothesis $f_t \in \mH$ over the mixed distribution $\D^{(t)} = \sum_{i\in [k]}p_t(i) \D_i$. 
Here $p_t$ is the strategy of MWU, and it is updated by the loss of $f_t$ over distributions $(\D_{i})_{i\in [k]}$, i.e. $\ell_t = (\ell_{\D_i}(f_t))_{i\in [k]} \in [0,1]^k$. The final output is taken to be $f = \frac{1}{T}\sum_{t\in [T]}f_t$. 
The regret guarantee of MWU ensures that the worst case error of $f$ over $\D_1, \ldots, \D_k$ is close to the average error of $f_t$ over $\D^{(t)}$, which is at most $\eps$. For the sample complexity, in the realizable setting, the sample complexity per round is $\wt{O}((k+d)/\eps)$ and $T= \Omega(\log(k))$ rounds are needed; under the agnostic learning setting, the sample complexity per round is $\wt{O}((k+d)/\eps^2)$ and $T = \wt{\Omega}(1/\eps^2)$ rounds are needed. It is not hard to see that both terms are tight in the worst case and they form the major technical obstacle for obtaining the optimal sample complexity.

\paragraph{Width reduction} Our approach also falls into this MWU framework and the key idea for improvement is {\em recursive width reduction}. In the literature of online learning, width refers to the maximum range of the loss vector, i.e., $\max_{i\in [k]}\ell_t(i) - \min_{i\in [k]}\ell_t(i)$. 
To get a quick sense of how width reduction works, recall the regret guarantee of MWU equals $O(\sqrt{\log(k)/T}B)$, where $B$ is the width of the loss vector $\ell_t$ and it equals $1$ in the above framework. In order to get $\eps$-regret, one needs to take $T \geq \wt{\Omega}(1/\eps^2)$. If one can reduce the width, then it immediately reduces the number of iterations for MWU, and consequently, reduces the sample complexity.

For now, we assume the optimal error $\OPT:= \min_{h^* \in \mH}\max_{i \in [k]}\ell_{\D_i}(h^{*})$ is known to the algorithm, this assumption can be easily removed and we defer the discussion to the end.  Our idea is to reduce the width using the algorithm itself. Recall we have an algorithm of sample complexity $\wt{O}((d+k)/\eps^{4})$ using MWU. Now at each round $t\in [T]$, we first draw $\wt{O}(d/\eps^2)$ samples from $\D^{(t)}$. Instead of running ERM, we first obtain a subset of the hypothesis $\mH' \subseteq \mH$ by removing all hypothesis $h \in \mH$ that has error more than $\OPT + \eps$.
We then run the MWU algorithm with error parameter $\eps' = \eps^{1/2}$ over hypothesis class $\mH'$ (so the additional samples it needs is still $\wt{O}((d+k)/(\eps')^4) = \wt{O}((d+k)/\eps^2)$), and obtain a hypothesis $f_t$. The hypothesis $f_t$ has additional guarantees on the maximum loss, i.e., $\max_{i \in [k]}\ell_{\D_i}(f_t) \leq \OPT + \eps^{1/2}$. This reduces the maximum loss from $1$ to $\OPT+ \eps^{1/2}$. However, we have no guarantee on the minimum loss, and the width could still be as large as $\OPT+\eps^{1/2} \approx \Theta(1)$.

To get a lower bound on $\ell_t$, we can truncate small entries: If an entry $\ell_t(i) = \ell_{\D_i}(f_t) \leq \OPT - \eps^{1/2}$ is small, then we take it as $\ell_t(i) = \OPT -\eps^{1/2}$. In this way, the width reduces to $(\OPT+\eps^{1/2}) - (\OPT - \eps^{1/2}) = 2\eps^{1/2}$. However, there is a fatal issue here: There is no reason we can arbitrarily truncate the loss. Recall we need the average loss $\sum_{i \in [k]}p_t(i)\ell_{t}(i)$ to be close to $\OPT+\eps$. If we truncate the small entries, then we increase its value. As a concrete example, if $\OPT = 1/2$, there are $2\eps^{1/2}$-fraction of $(\ell_{\D_i}(f_t))_{i\in [k]}$ equal $0$, and the other $1-2\eps^{1/2}$ fraction equal $\OPT+\eps^{1/2}$, then there is no way one can truncate the loss.

The next idea is, instead of relying on uniform convergence and selecting the hypothesis class $\mH'$ that are $\eps$-optimal on $\D^{(t)}$, we need more refined properties of $\mH'$ that make the loss  $(\ell_{\D_i}(f_t))_{i\in [k]}$ more balanced. 
To this end, we want the hypothesis class $\mH'$ satisfies the following two properties:
\begin{itemize}
\item \textbf{Soundness.} The optimal classifier survives, i.e., $h^{*} \in \mH'$, where $h^{*} = \argmin_{h\in \mH}\max_{i\in [k]}\ell_{\D_i}(h)$.
\item \textbf{Completeness.} For any hypothesis $h \in \mH'$ that survives, it satisfies the following guarantee. For any subset of distributions $\mI \subseteq [k]$, if their weights $\sum_{i\in [n]}p_t(i) \geq 1/2$, then the loss of $h$ on the distribution $\sum_{i\in \mI} \frac{p_t(i)}{\sum_{i\in\mI}p_t(i)}\D_{i}$ is at most $\OPT + O(\eps)$.
\end{itemize}
The first property states that the optimal classifier $h^{*}$ survives, this ensures that it is safe to work with $\mH'$ instead of $\mH$. The second property is more complicated, but from a high level, it says that any surviving hypothesis in $\mH'$ is robust -- not only their loss is small on the entire distribution $\D^{(t)} = \sum_{i\in [k]}p_t(i)\D_i$, but also small on any sub-populations $\{\D_{i}\}_{i \in \mI}$ of mass at least $1/2$. Suppose for now, we have achieved these two properties with $\wt{O}((d+k)/\eps^2)$ samples. Then we can safely truncate the loss $\ell_{t}(i) = \max\{\ell_{\D_i}(f_t), \OPT - \eps^{1/2}\}$ and reduce the width of $\ell_{t} \in [\OPT-\eps^{1/2}, \OPT+\eps^{1/2}]^{k}$ to $2\eps^{1/2}$. It remains to argue that the average loss satisfies $\sum_{i\in [k]}p_{t}(i)\ell_{t}(i) \leq \OPT + O(\eps)$.\footnote{We also need to ensure $\ell_t(i)$ is an overestimate of $\ell_{\D_i}(f_t)$, but this is trivial from the definition.} 
To this end, we sort the loss $\{\ell_{\D_i}(f_t)\}_{i \in [k]}$ and assume $\ell_{\D_1}(f_t)\geq \cdots \geq \ell_{\D_k}(f_t)$ w.l.o.g. Suppose $k' \in [k]$ is the smallest index such that $\sum_{i\leq k'}p_{t}(i) \geq 1/2$.
\begin{itemize}
\item \textbf{Case 1.} If $\ell_{\D_{k'}}(f_t) \geq \OPT - \eps^{1/2}$ (i.e., no truncation at the larger half), then by the completeness property,\footnote{We actually need the hypothesis $f_{t}$ to be a weighted average of hypothesis in $\mH'$ in order to inherit the completeness property of $\mH'$, this is naturally satisfied by algorithms in the MWU framework.} the larger half has loss at most $\OPT+O(\eps)$, since there is no truncation. Meanwhile, the loss of the smaller half is no more than the larger half, so the average loss is at most $\OPT + O(\eps)$.
\item \textbf{Case 2.} Otherwise, if $\ell_{\D_{k'}}(f_t) < \OPT - \eps^{1/2}$, then performing truncation is still fine, because more than $1/2$-fraction of distributions have loss smaller than $\OPT - \eps^{1/2}$, while the rest of them (at most $1/2$-fraction) have loss at most $\OPT+\eps^{1/2}$. 
\end{itemize}

Now we elaborate a bit on how we achieve both soundness and completeness. It proceeds in two steps. First, we draw $\wt{O}(d/\eps)$ samples $S^{(t)}_1$ from $\D^{(t)}$ and look at the projection of $\mH$ on $S^{(t)}_1$. We construct an $\eps$-cover $\C_{\mH}$ of $\mH$ by including an arbitrary hypothesis for each projection. This is a fairly standard trick (e.g. see \cite{alon2019limits}).
Next, we draw another $\wt{O}((d+k)/\eps^2)$ samples $S^{(t)}_2$ from $\D^{(t)}$ and run the following test on $S^{(t)}_2$.
For each hypothesis $h \in \C_{\mH}$ in the $\eps$-cover, if there exists a subset $\mI\subseteq [k]$ of distributions such that (1) $\sum_{i \in \mI}p_t(i) \geq 1/2$ and (2) the empirical loss of $h$ on $\sum_{i\in \mI} \frac{p_t(i)}{\sum_{i\in\mI}p_t(i)}\D_{i}$ is larger that $\OPT + O(\eps)$, then we remove $h$, as well as all hypothesis that have the same projection as $h$ on $S_{1}^{(t)}$, from $\mH'$. In the proof, we show that this test guarantees soundness and completeness with high probability.

\paragraph{Recursive width reduction} The width reduction procedure described above reduces the width from $1$ to $2\eps^{1/2}$. The regret now becomes $\wt{O}(\eps^{1/2}/\sqrt{T})$, and it suffices to take $T = \wt{O}(1/\eps)$. The sample complexity per round remains $\wt{O}((d+k)/\eps^2)$ and there are $T = \wt{O}(1/\eps)$ rounds, so we improve the sample complexity from $\wt{O}((d+k)/\eps^4)$ to $\wt{O}((d+k)/\eps^3)$. We can continue this process, and use this new algorithm for width reduction. In particular, at each round, we can take the error parameter $\eps' = \eps^{2/3}$ and use $\wt{O}((d+k)/(\eps')^{3}) = \wt{O}((d+k)/(\eps)^{2})$ samples to reduce the maximum loss to $\OPT +\eps' = \OPT + \eps^{2/3}$ (instead of $\OPT + \eps^{1/2}$). The regret now becomes $\wt{O}(\eps^{2/3}/\sqrt{T})$ and we can further reduce the number of rounds to $T = \wt{O}(\eps^{-2/3})$ and the sample complexity to $\wt{O}((d+k)/\eps^{8/3})$. 
We repeat the above process and obtain an algorithm of sample complexity $O((d+k)/\eps^2) \cdot (k/\eps)^{o(1)}$.

\paragraph{Remove prior knowledge on $\OPT$} The above algorithm requires prior knowledge on the optimal value (for both the testing step and the truncation step), we next remove this assumption. 
It is not hard to see that the above algorithm succeeds with an $\eps$-approximate $\OPT' \in [\OPT -\eps, \OPT +\eps]$ (i.e., no need for the exact value of $\OPT$).
Hence, we can run $1/\eps$ threads of the algorithm with $\OPT'=\eps,2\eps, \ldots, 1$ and take the best one. This has sample complexity $(k+d)/\eps^{2} \cdot (1/\eps) = (k+d)/\eps^3$ (we omit the $o(1)$ term for simplicity) and does not require any knowledge of $\OPT$. 
Next, we take $\eps' = \eps^{2/3}$ and runs the algorithm with $(k+d)/(\eps')^3 = (k+d)/\eps^2$ samples . The output hypothesis has error at most $\OPT + O(\eps') = \OPT + O(\eps^{2/3})$. Now, we can reduce the size of the grid search and only search for $\eps^{2/3}/\eps = \eps^{-1/3}$ possible value of $\OPT'$ (instead of $1/\eps$), this reduces the sample complexity from $(k+d)/\eps^{3}$ to $(k+d)/\eps^{7/3}$. Again, we repeat this process and get an algorithm of sample complexity $O((d+k)/\eps^2) \cdot (k/\eps)^{o(1)}$ without any knowledge of $\OPT$.



\subsection{Related work}
The sample complexity of multi-distribution learning has been extensively studied in the past decade \cite{blum2017collaborative, nguyen2018improved, qiao2018outliers, chen2018tight,tao2019collaborative, blum2021one, haghtalab2022demand, awasthi2023open}. The optimal sample complexity has been derived in the realizable setting \cite{blum2017collaborative, nguyen2018improved, chen2018tight}. For the more general agnostic learning setting, the optimal sample complexity has been obtained for finite hypothesis class \cite{haghtalab2022demand} but the question is widely open for VC classes, we refer interesting readers for the open problem publication of \cite{awasthi2023open} for an excellent coverage on the literature.

The multi-distribution learning has applications to fairness \cite{hebert2018multicalibration, mohri2019agnostic, shekhar2021adaptive, rothblum2021multi, tosh2022simple} and group distributional robust optimization \cite{ben2009robust, rahimian2019distributionally, sagawa2019distributionally, sagawa2020investigation, duchi2021learning}.
It is also closely related to multi-task learning \cite{caruana1997multitask}, distributed learning \cite{balcan2012distributed}, federated learning \cite{mcmahan2017communication}, meta learning \cite{finn2017model} and continual learning \cite{chen2022memory}.

Our approach can be seen as a boosting framework for (agnostic) multi-distribution learning. 
It converts a weak multi-distribution learner into one with better sample complexity guarantee. 
There is a vast literature on boosting \cite{schapire1990strength, freund1997decision, freund1999short,ben2001agnostic,mansour2002boosting,kalai2003boosting, kalai2008agnostic, balcan2012distributed,schapire2013explaining,beygelzimer2015optimal,brukhim2020online,alon2021boosting,brukhim2021multiclass, brukhim2023improper}, but to the best of our knowledge, it is the first time that width reduction has been used -- we hope it could find broad applications for boosting.
The idea of width reduction traces back to the seminal work of positive LP solver \cite{garg2007faster} and approximate max flow \cite{christiano2011electrical}, which use separate subroutines for width reduction. The idea of recursive width reduction (recursively applying the algorithm itself to reduce the width) has been introduced recently by \cite{peng2023online} and it is crucial for the recent development of low memory online learning algorithm \cite{peng2023online, peng2023near}.
These previous work are very inspiring, but our way of width reduction, which forms the major challenging part of the proof, is unique and different.

\paragraph{Concurrent and independent work} We were recently made aware of the concurrent and independent work of Zhang, Zhan, Chen, Du and Lee \cite{zhang2023optimal}, which obtains the similar result as Theorem \ref{thm:multi}. Moreover, their result has the optimal sample complexity up to {\em polylogarithmic} factor, and their algorithm is oracle efficient. Their result is derived via a different set of technique, which relies on sampling reusing.

\section{Preliminary}
\label{sec:pre}

Let $\X$ be the data universe and $\Y = \{0,1\}$ be binary labels. Let $\mH$ be a hypothesis class of VC dimension $d$, a hypothesis $h\in \mH$ maps the data universe $\X$ to the binary label $\Y$.
For any hypothesis $f: \X \rightarrow \Y$ and any distribution $\D$ over $\X\times \Y$, the population loss of $f$ over $\D$ equals
$
\ell_{\D}(f) := \Pr_{(x, y) \sim \D}[f(x) \neq y].
$

\begin{definition}[Multi-distribution learning]
Let $\eps > 0$ be the error parameter.
In the task of multi-distribution learning, there are $k$ distributions $\D_1, \ldots, \D_{k}$ over $\X \times \Y$.
The goal is to learn a hypothesis $f$ that minimizes the maximum loss, i.e.,
\begin{align}
\max_{i \in [k]}\ell_{\D_i}(f) \leq \min_{h^{*} \in \mH}\max_{i \in [k]}\ell_{\D_i}(h^{*})  + \eps. \label{eq:multi-goal}
\end{align}
\end{definition} 
We say an algorithm is an $(\eps, \delta)$-multi-distribution learner if its output satisfies Eq.~\eqref{eq:multi-goal} with probability at least $1 - \delta$. 
In the rest of this paper, we write $h^{*} \in \mH$ to be the hypothesis that obtains the minimum loss and $\OPT$ be the minimum loss, i.e.,
\begin{align*}
h^{*} = \argmin\limits_{h \in \mH}\max_{i \in [k]}\ell_{\D_i}(h) \quad \quad \text{and} \quad \quad \OPT = \max_{i \in [k]}\ell_{\D_i}(h^{*}).
\end{align*}


For any set $S = \{x_1, \ldots, x_n\} \in \X^{d}$, let $\mH(S) := \{(h(x_1), \ldots, h(x_n)) : h \in \mH\}\subseteq \{0,1\}^{n}$ be the projection of $\mH$ onto $S$. The Sauer–Shelah Lemma gives an upper bound on the size $|\mH(S)|$.
\begin{lemma}[Sauer–Shelah Lemma \cite{sauer1972density,shelah1972combinatorial}]
\label{lem:ss}
Let $\mH$ be a hypothesis class with VC dimension $d$, then for any $S\subseteq \X$ with $|S| = n$, $|\mH(S)| \leq \sum_{i=0}^{d}\binom{n}{i}$. In particular,  $|\mH(S)| \leq (en/d)^{d}$ if $n\geq d$.
\end{lemma}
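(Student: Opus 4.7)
The plan is to prove the Sauer--Shelah bound by induction on $n+d$, and then derive the $(en/d)^d$ corollary by a short binomial estimate. The base cases are straightforward: if $d=0$, then no singleton is shattered, so any two hypotheses agree on every point of $S$, giving $|\mH(S)|\le 1=\binom{n}{0}$; and if $n=0$, then $|\mH(S)|\le 1=\binom{0}{0}$.

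For the inductive step, fix an arbitrary point $x\in S$ and let $S'=S\setminus\{x\}$. I would split $\mH(S)$ into pairs of labelings that agree on $S'$ but differ on $x$, and singletons that do not occur in such a pair. Writing $\mH_1(S')$ for the projection of $\mH$ onto $S'$, and $\mH_2(S')$ for the set of $u\in\mH_1(S')$ for which both $(u,0)$ and $(u,1)$ appear in $\mH(S)$, one has the key identity
\[
|\mH(S)|=|\mH_1(S')|+|\mH_2(S')|.
\]
The class underlying $\mH_1(S')$ has VC dimension at most $d$ (any subset of $S'$ shattered by $\mH_1$ is already shattered by $\mH$), while the class underlying $\mH_2(S')$ has VC dimension at most $d-1$, because any shattered $T\subseteq S'$ for $\mH_2$ lifts to a shattered set $T\cup\{x\}$ for $\mH$ of size one more. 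Applying the inductive hypothesis to both classes on $S'$ (which has size $n-1$), and then using Pascal's identity $\binom{n-1}{i}+\binom{n-1}{i-1}=\binom{n}{i}$, yields $|\mH(S)|\le\sum_{i=0}^{d}\binom{n}{i}$ as desired.

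For the ``in particular'' clause, assuming $n\ge d$, I would estimate
\[
\sum_{i=0}^{d}\binom{n}{i}\le\left(\frac{n}{d}\right)^{d}\sum_{i=0}^{d}\binom{n}{i}\left(\frac{d}{n}\right)^{i}\le\left(\frac{n}{d}\right)^{d}\sum_{i=0}^{n}\binom{n}{i}\left(\frac{d}{n}\right)^{i}=\left(\frac{n}{d}\right)^{d}\left(1+\frac{d}{n}\right)^{n}\le\left(\frac{en}{d}\right)^{d},
\]
where the first inequality uses $(n/d)^{d-i}\ge 1$ for $i\le d$ and the final inequality uses $(1+x)\le e^{x}$.

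I do not expect any serious obstacle: the only subtle point is the VC-dimension-drop argument for $\mH_2(S')$, where one must be careful to observe that the hypotheses witnessing each element of $\mH_2(S')$ come in complementary pairs at $x$, so the lifted set $T\cup\{x\}$ is genuinely shattered by $\mH$. Once that is in hand, the rest is Pascal's rule and standard binomial manipulation.
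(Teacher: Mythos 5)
Your proof is correct, and it is the standard inductive ``shifting'' proof of the Sauer--Shelah lemma; the paper itself does not prove this lemma, it simply cites \cite{sauer1972density,shelah1972combinatorial} and uses the bound as a black box, so there is nothing in the paper to compare against. Two small remarks worth keeping in mind if you write this out fully. First, when you apply the inductive hypothesis to the ``class underlying $\mH_2(S')$,'' you are really applying the lemma to a family of binary vectors on a ground set of size $n-1$ rather than to a projection of a hypothesis class over $\X$; this is fine, but it means the induction should be phrased for arbitrary finite set systems (equivalently, restrict $\mH$ to $S$ first and induct on the restricted family), so that both $\mH_1(S')$ and $\mH_2(S')$ are legitimate instances of the statement. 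Second, the inductive measure $n+d$ works, but you need to observe that in both applications the measure strictly decreases ($(n-1)+d$ and $(n-1)+(d-1)$), and you need the convention $\binom{n}{i}=0$ for $i>n$ so that the $d>n$ case is also covered by the displayed sum. With those bookkeeping points in place, the key identity $|\mH(S)|=|\mH_1(S')|+|\mH_2(S')|$, the VC-dimension drop for $\mH_2$, Pascal's rule, and the $(1+d/n)^n\le e^d$ estimate all go through exactly as you describe.
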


The multiplicative weight updating \cite{littlestone1994weighted} is a classic algorithm for online learning. 
An online learning task can be seen as a repeated game between an algorithm and the nature for a sequence of $T$ rounds. Let $[n] = \{1,2,\ldots, n\}$ and let $\Delta_{n}$ be all probability distributions over $[n]$. 
The MWU algorithm commits a distribution $p_t \in \Delta_n$ over a set of $n$ experts at each round $t \in [T]$, and then the nature reveals the loss $\ell_t \in \R^{n}$ for experts $[n]$. 
The goal is to minimize the regret $\sum_{t\in [T]}\langle p_t, \ell_t\rangle - \min_{i^{*}\in [n]}\sum_{t\in [T]}\ell_t(i^{*})$

\begin{algorithm}[!htbp]
\caption{Multiplicative weight update}
\label{algo:mwu}
\begin{algorithmic}[1] 
\For{$t=1,2, \ldots, T$}
\State Compute $p_t \in \Delta_n$ over experts such that $p_t(i) \propto \exp(-\eta \sum_{\tau=1}^{t-1}\ell_\tau(i))$ for $i\in [n]$
\State Observe the loss vector $\ell_t$ and receives loss $\langle p_t, \ell_t\rangle$
\EndFor
\end{algorithmic}
\end{algorithm}
\begin{lemma}[Regret guarantee of MWU \cite{arora2012multiplicative}]
\label{lem:mwu}
Let $n$ be the number of experts, $T$ be the number of days, $B$ be the width of the loss sequence, i.e., the loss vector $\ell_t \in [\rho_{t},\rho_{t} + B]^n$ at each day $t\in [T]$. Let $\eta = \sqrt{\log(n)/T}/B$ be the learning rate, then the MWU algorithm guarantees
\begin{align*}
    \sum_{t \in [T]} \langle p_t, \ell_t\rangle - \min_{i^{*}\in [n]}\sum_{t\in [T]}\ell_t(i^{*}) \leq \frac{\log n}{\eta} + \eta T B^2 = 2\sqrt{\log(n)T} B.
\end{align*}
\end{lemma}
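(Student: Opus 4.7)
The plan is a standard exponential-weights potential argument, preceded by a shift-invariance reduction.

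First, I would note that the algorithm's plays $p_t$ and the regret $\sum_t\langle p_t,\ell_t\rangle - \sum_t\ell_t(i^*)$ are both invariant under replacing $\ell_t$ by $\tilde\ell_t := \ell_t-\rho_t\mathbf{1}$: the MWU update $p_t(i)\propto \exp(-\eta\sum_{\tau<t}\ell_\tau(i))$ depends on the losses only up to a common additive constant (which cancels in the normalization), and $\langle p_t,\ell_t\rangle - \ell_t(i^*)$ is likewise invariant since $\langle p_t,\mathbf{1}\rangle=1$. Hence, without loss of generality, I may assume the losses live in $[0,B]^n$ throughout.

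Next, following the textbook potential argument, I define $W_t := \sum_{i\in[n]} w_t(i)$ where $w_t(i) = \exp(-\eta\sum_{\tau<t}\tilde\ell_\tau(i))$, so that $W_1 = n$ and $p_t(i)=w_t(i)/W_t$. The strategy is to sandwich $W_{T+1}$ from both sides: from below, $W_{T+1}\geq w_{T+1}(i^*)=\exp(-\eta\sum_t\tilde\ell_t(i^*))$ for the best expert $i^*$; from above, I would write $W_{t+1}/W_t = \sum_i p_t(i)\exp(-\eta\tilde\ell_t(i))$ and apply the elementary inequality $e^{-y}\leq 1-y+y^2$ (valid for all $y\geq 0$) together with the range bound $\tilde\ell_t(i)^2\leq B\tilde\ell_t(i)$ to obtain
\begin{align*}
W_{t+1}/W_t \;\leq\; 1 - \eta(1-\eta B)\langle p_t,\tilde\ell_t\rangle \;\leq\; \exp\bigl(-\eta(1-\eta B)\langle p_t,\tilde\ell_t\rangle\bigr).
\end{align*}

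Telescoping this per-round upper bound across $t=1,\ldots,T$, combining with the lower bound, taking logarithms, and bounding the residual term $\eta B \langle p_t,\tilde\ell_t\rangle$ crudely by $\eta B^2$ then yields
\begin{align*}
\sum_t\langle p_t,\ell_t\rangle - \sum_t\ell_t(i^*) \;\leq\; \frac{\log n}{\eta} + \eta T B^2,
\end{align*}
and the choice $\eta=\sqrt{\log n/T}/B$ balances the two terms at $B\sqrt{\log n\cdot T}$ each. (If $\eta B>1$, i.e.\ $T<\log n$, the claimed bound is already weaker than the trivial bound $TB$ on the regret, so nothing needs to be proven.) There is no single difficult step; the only subtle point is ensuring the quadratic correction in the one-step bound scales like $\eta^2 B^2$ rather than $\eta^2 B$. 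This is exactly what the inequality $\tilde\ell_t(i)^2\leq B\cdot\tilde\ell_t(i)$ (or, equivalently, Hoeffding's lemma applied to a random variable of range $B$) provides, and it is what permits $\eta$ to be taken as small as $\sqrt{\log n/T}/B$ to match the claimed $2\sqrt{\log(n)T}\,B$ bound.
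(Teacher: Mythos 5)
Your proof is correct: the shift-invariance reduction to $[0,B]^n$, the potential argument with $e^{-y}\leq 1-y+y^2$ and $\tilde\ell_t(i)^2\leq B\,\tilde\ell_t(i)$, and the final balancing of $\frac{\log n}{\eta}+\eta TB^2$ all go through. The paper does not prove this lemma itself but cites \cite{arora2012multiplicative}, whose standard exponential-weights potential argument is essentially what you have reproduced, so there is no substantive difference in approach.
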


\section{The boosting framework}
\label{sec:algo}

We provide a general boosting framework that takes an arbitrary multi-distribution learning algorithm, reduces its error while incurring a mild overhead on the sample complexity.
The boosting framework is formally described in Algorithm \ref{algo:boost}. It contains several subroutines, whose pseudocodes are presented in Algorithm \ref{algo:cover}-\ref{algo:estimate}.
The input of $\textsc{BoostLearner}$ (Algorithm \ref{algo:boost}) consists of the hypothesis class $\mH$, $k$ distributions $\D_1, \ldots, \D_k$, a multi-distribution learning algorithm $\textsc{MultiLearnerOracle}$, as well as an estimate $\OPT'$  on the optimal loss.

$\textsc{BoostLearner}$ views $\D_1, \ldots, \D_k$ as $k$ experts, and runs MWU over them for $T$ rounds.
At each round $t \in [T]$, $\textsc{BoostLearner}$ maintains a strategy $p_t \in \Delta_k$ over $k$ data distributions, and let $\D^{(t)} = \sum_{i\in [k]}p_t(i)\D_{i}$ be the mixed distribution of $\D_1, \ldots, \D_k$. 
$\textsc{BoostLearner}$ proceeds in a few steps.


\vspace{+2mm}
{\bf \noindent Construct $\eps$-cover of $\mH$ \ \ } The first step is to construct an $\eps$-cover of the hypothesis $\mH$ on the distribution $\D^{(t)}$ (Line \ref{line:cover} of Algorithm \ref{algo:boost}). 
$\textsc{ConstructCover}$ (Algorithm \ref{algo:cover}) samples $m_1 = \wt{O}(d/\eps)$ data points $S_{1}^{(t)}$ from $\D^{(t)}$, the cover $\C_{\mH} \subseteq \mH$ is constructed by including an arbitrary hypothesis $h \in \mH$ for each projection of $\mH(S_1^{(t)})$. 

\vspace{+2mm}
{\bf \noindent Filter $\mH$ \ \ } Given the $\eps$-cover $\C_{\mH}$, the next step is to filter $\mH$ and only keep a subset of good hypothesis $\mH'\subseteq \mH$ (Line \ref{line:filter} of Algorithm \ref{algo:boost}). The $\textsc{Filter}$ procedure (Algorithm \ref{algo:filter}) draws $m_2 = \wt{O}(\frac{d+k}{\eps^2})$ samples $S^{(t)}_2$ from $\D^{(t)}$ as a test set.
For each hypothesis $h$ in the cover $\C_{\mH}$, it goes through all subsets $\mI$ of $[k]$ . If the probability mass $\sum_{i\in \mI}p_t(i)$ is large enough (i.e., greater than $1/2$) and the empirical loss of $h$ on the mixture distribution $\sum_{i\in\mI}\tfrac{p_{t}(i)}{\sum_{i\in\mI}p_{t}(i)}\D_i$ is large (i.e., great than $\OPT'+8\eps$), then it removes $h$, as well as any hypothesis $h' \in \mH$ that has the same projection as $h$ on $S_{1}^{(t)}$, from $\mH'$.

\vspace{+2mm}
{\bf \noindent Evoke the oracle \ \ } After obtaining the new hypothesis class $\mH' \subseteq \mH$, $\textsc{BoostLearner}$ evokes the oracle $\textsc{MultiLearnerOracle}$ with hypothesis $\mH'$ and distributions $\D_1, \ldots, \D_k$, and obtains a hypothesis $f_t$.

\vspace{+2mm}
{\bf \noindent Construct the loss vector \ \ }
Given the hypothesis $f_t$, $\textsc{BoostLearner}$ constructs the loss vector $\ell_t$ and feeds it to MWU (Line \ref{line:estimate}-\ref{line:update} of Algorithm \ref{algo:boost}).
$\textsc{Estimate}$ (Algorithm \ref{algo:estimate}) draws $\wt{O}(1/\eps^2)$ samples from each distribution $\D_i$ and compute the empirical loss $\hat{\ell}_{\D_i}(f_t)$ of $f_t$ on $\D_i$.
Instead directly using this empirical loss, $\textsc{Estimate}$ further truncates loss entries that are  below $\OPT' - \alpha$ (Line \ref{line:truncate} of Algorithm \ref{algo:estimate}), here $\alpha$ is the error of $\textsc{MultiLearnerOracle}$.

\vspace{+2mm}
{\bf \noindent Final output \ \ } The final output is taken to be the average of $\{f_t\}_{t\in [T]}$. In particular, the output $f = \frac{1}{T}\sum_{t\in [T]}f_t$ is defined as
\[
\Pr[f(x) = 1] = \frac{1}{T}\sum_{t\in [T]}\Pr[f_t(x) = 1] \quad \quad \forall x \in \X.
\]

\begin{algorithm}[!htbp]
\caption{$\textsc{BoostLearner}(\mH, \D_1, \D_2, \ldots, \D_k, \textsc{MultiLearnerOracle}, \OPT')$}
\label{algo:boost}
\begin{algorithmic}[1]
\For{$t = 1,2,\ldots, T$}
\State $\D^{(t)} \leftarrow \sum_{i\in [k]}p_{t}(i)\D_i$  \Comment{$p_t \in \Delta_k$ is the strategy of MWU}
\State $\C_{\mH} \leftarrow \textsc{ConstructCover}(\mH, \D^{(t)})$ \label{line:cover}
\State $\mH' \leftarrow \textsc{Filter}(\mH, \D^{(t)}, \C_{\mH}, \OPT')$ \label{line:filter}
\State $f_t \leftarrow \textsc{MultiLearnerOracle}(\mH', \D_1, \ldots, \D_k, \OPT')$
\State $\ell_t \leftarrow \textsc{Estimate}(f_t)$ \label{line:estimate}
\State Update the strategy of MWU with loss vector $-\ell_t$ \label{line:update}
\EndFor
\State \Return $f = \frac{1}{T}\sum_{t\in [T]}f_t$
\end{algorithmic}
\end{algorithm}

\begin{algorithm}[!htbp]
\caption{$\textsc{ConstructCover}(\mH, \D^{(t)})$}
\label{algo:cover}
\begin{algorithmic}[1]
\State Sample $m_1 = O(\frac{d\log(kd/\eps\delta)}{\eps})$ data points $S_1^{(t)} = \{(x_j, y_j)\}_{j \in [m_1]}$ from $\D^{(t)}$
\State $\C_{\mH} \leftarrow \emptyset$
\For{$(z_1, \ldots, z_{m_1}) \in \mH(S^{(t)}_1)$}  \Comment{$\mH(S^{(t)}_1)$ is the projection of $\mH$ onto $S^{(t)}_1$}
\State Let $h \in \mH$ be an arbitrary hypothesis that satisfies $h(x_j) = z_{j}$ for all $j \in [m_1]$
\State $\C_{\mH} \leftarrow \C_{\mH} \cup \{h\}$
\EndFor
\State \Return $\C_\mH$
\end{algorithmic}
\end{algorithm}

\begin{algorithm}[!htbp]
\caption{$\textsc{Filter}(\mH, \D^{(t)}, \C_{\mH}, \OPT')$}
\label{algo:filter}
\begin{algorithmic}[1]
\State Sample $m_2 = O(\frac{(k+d)\log(kd/\eps\delta)}{\eps^2})$ data points $S^{(t)}_2 = \{(x_j, y_j)\}_{j \in [m_2]}$ from $\D^{(t)}$
\State $\mH' \leftarrow \mH$
\For{$h \in \C_{\mH}$}
\For{$\mI \subseteq [k]$}
\If{$\sum_{i \in \mI}p_t(i) \geq 1/2$ \textbf{and} $\frac{\sum_{j \in [m_2]} \mathsf{1}\{x_j \in \D_{\mI} \wedge h(x_j) \neq y_j\}}{\sum_{j \in [m_2]} \mathsf{1}\{x_j \in \D_{\mI}\}} \geq \OPT' + 8\eps $} \Comment{$\D_\mI:= \cup_{i\in \mI}\D_i$}
\State $\mH' \leftarrow \mH' \backslash \{h' \in \mH: h(x) = h'(x) \, \forall x \in S_{1}^{(t)}\}$  
\EndIf
\EndFor
\EndFor
\State \Return $\mH'$
\end{algorithmic}
\end{algorithm}

\begin{algorithm}[!htbp]
\caption{$\textsc{Estimate}(f_t)$}
\label{algo:estimate}
\begin{algorithmic}[1]
\For{$i = 1,2,\ldots, k$}
\State Sample $m_3 = O(\log(kd/\eps\delta)/\eps^2)$ data points $S_{3, i}^{(t)}$ from $\D_i$ 
\State $\hat{\ell}_{\D_i}(f_t) \leftarrow \Pr_{(x, y)\sim S_{3, i}^{(t)}}[f_t(x) \neq y]$
\State $\ell_t(i) \leftarrow \max\{\hat{\ell}_{\D_i}(f_t), \OPT' - \alpha\}$ \label{line:truncate} \Comment{$\alpha$ is the error of $\textsc{MultiLearnerOracle}$}
\EndFor
\State \Return $\ell_t$ 
\end{algorithmic}
\end{algorithm}

\subsection{Analysis}

Given an (infinite) hypothesis class $\mH$, let $\Delta(\mH)$ be all distributions over $\mH$ with finite support. Our goal is to prove
\begin{lemma}[Boosting framework]
\label{lem:boosting}
Suppose $\OPT' \in [\OPT -\eps, \OPT + \eps]$ and $\textsc{MultiLearnerOracle}$ is an $(\alpha, \delta/16T)$-multi-distribution learner whose output $f_t \in \Delta(\mH)$. 
Let $T = \log(k)(\alpha/\eps)^2$, then with probability at least $1-\delta$, $\textsc{BoostLearner}$ guarantees
\begin{align*}
\max_{i \in [k]}\ell_{\D_i}(f) \leq \OPT + 32 \eps.
\end{align*}
\end{lemma}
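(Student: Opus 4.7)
The plan is to condition on four high-probability events per round $t$, union-bound over $T$, and then prove the lemma deterministically. The events are: (i) with $m_1 = \wt O(d/\eps)$ samples, standard VC-cover arguments imply that every $h \in \mH$ admits an $h' \in \C_\mH$ agreeing with $h$ on $S_1^{(t)}$ and satisfying $\Pr_{x\sim \D^{(t)}}[h(x)\neq h'(x)]\leq \eps$; since any subdistribution $\D_{\mI,\text{norm}} := \sum_{i\in\mI}\tfrac{p_t(i)}{\sum_{i\in\mI}p_t(i)}\D_i$ with $\sum_{i\in\mI}p_t(i)\geq 1/2$ has density $\leq 2\D^{(t)}$, the cover is automatically $2\eps$-fine on every such sub-population; (ii) empirical-to-population convergence with slack $\eps$ for every $(h,\mI) \in \C_\mH\times 2^{[k]}$ of valid weight, which follows from $m_2 = \wt O((d+k)/\eps^2)$ samples in $S_2^{(t)}$ via Chernoff plus a union bound ($\log|\C_\mH| = \wt O(d)$, $|2^{[k]}| = 2^k$); (iii) $|\hat\ell_{\D_i}(f_t)-\ell_{\D_i}(f_t)|\leq\eps$ from $m_3 = \wt O(1/\eps^2)$ by Hoeffding; (iv) $\textsc{MultiLearnerOracle}$ succeeds with probability $\geq 1-\delta/(16T)$. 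Together all events hold with probability $\geq 1-\delta$.

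\textbf{Soundness, completeness, and bounds on $\ell_t$.} Conditioned on the above, the cover element $h^{**}$ of $h^*$ satisfies $\ell_{\D_{\mI,\text{norm}}}(h^{**})\leq\OPT+2\eps$ for every valid $\mI$, hence empirical loss $\leq \OPT+3\eps < \OPT'+8\eps$, so $h^{**}$ (and every hypothesis sharing its projection on $S_1^{(t)}$, in particular $h^*$) survives \textsc{Filter}. Conversely, any surviving $h\in\mH'$ has empirical loss $\leq \OPT'+8\eps$ on every valid $\D_{\mI,\text{norm}}$, population loss $\leq \OPT'+9\eps$, and by the cover $\ell_{\D_{\mI,\text{norm}}}(h)\leq\OPT+12\eps$. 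By linearity of $\ell$ in the hypothesis distribution, the same bound holds for $f_t\in\Delta(\mH')$. Soundness plus the oracle give $\max_i\ell_{\D_i}(f_t)\leq\OPT+\alpha$, so after estimation and truncation $\ell_t(i)\in[\OPT'-\alpha,\OPT+\alpha+\eps]$, a width of $B=O(\alpha+\eps)$. Partition $[k]=\mI^+\cup\mI^-$ by whether truncation triggered (so $\mI^+=\{i:\hat\ell_{\D_i}(f_t)>\OPT'-\alpha\}$) with masses $s^\pm$: if $s^+\geq 1/2$, completeness applied to $\mI^+$ yields $\langle p_t,\ell_t\rangle\leq s^+(\OPT+O(\eps))+s^-(\OPT'-\alpha)\leq\OPT+O(\eps)$; if instead $s^-\geq 1/2$, the oracle bound on $\mI^+$ gives
\begin{align*}
\langle p_t,\ell_t\rangle \leq s^+(\OPT+\alpha+\eps)+s^-(\OPT'-\alpha) \leq \OPT+O(\eps)+\alpha(s^+-s^-),
\end{align*}
and $s^+-s^-\leq 0$ kills the $\alpha$ term. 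In either case $\langle p_t,\ell_t\rangle \leq \OPT+O(\eps)$.

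\textbf{MWU regret, output, and main obstacle.} Because Algorithm~\ref{algo:boost} feeds MWU the negated loss $-\ell_t$, Lemma~\ref{lem:mwu} yields
\begin{align*}
\max_i \tfrac{1}{T}\sum_{t=1}^T\ell_t(i) \leq \tfrac{1}{T}\sum_{t=1}^T\langle p_t,\ell_t\rangle + \tfrac{2\sqrt{\log k}\,B}{\sqrt{T}} \leq \OPT+O(\eps),
\end{align*}
where the regret term is $O(\eps)$ after substituting $T=\log(k)(\alpha/\eps)^2$ and $B=O(\alpha+\eps)$. Since $\ell_t(i)\geq\hat\ell_{\D_i}(f_t)\geq\ell_{\D_i}(f_t)-\eps$ and the mixture identity $\ell_{\D_i}(f)=\tfrac{1}{T}\sum_t\ell_{\D_i}(f_t)$ holds by definition of $f$ as the probabilistic average of the $f_t$'s, we conclude $\max_i\ell_{\D_i}(f)\leq\OPT+32\eps$ with room to spare on the constant. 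I expect the main obstacle to be the bookkeeping of the case analysis in Paragraph~2: the $1/2$-mass threshold in \textsc{Filter}, the truncation level $\OPT'-\alpha$ in \textsc{Estimate}, and the $8\eps$-slack are jointly calibrated so that width shrinks to $O(\alpha)$ while the weighted mean of $\ell_t$ stays at $\OPT+O(\eps)$ regardless of which of $\mI^\pm$ is heavier. The completeness bound on every sub-population of mass $\geq 1/2$ is precisely the property that enables this balance; any weaker version breaks the width-reduction argument.
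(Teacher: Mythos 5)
Your proposal is correct and follows essentially the same architecture as the paper's proof: soundness of \textsc{Filter} ($h^*$ survives), completeness ($\mH'$ has controlled loss on every sub-population of mass $\geq 1/2$), which transfers to $f_t\in\Delta(\mH')$ by linearity; then truncation reduces the width to $O(\alpha)$ while keeping $\langle p_t,\ell_t\rangle \leq \OPT+O(\eps)$ via a two-case argument hinging on the $1/2$-mass threshold; finally the negated-loss MWU regret bound and the mixture identity close the argument. The only substantive differences are cosmetic: you package the cover guarantee on sub-populations as a clean ``density $\leq 2\D^{(t)}$'' domination (giving a $2\eps$-fine cover on any $\D_{\mI,\mathrm{norm}}$ with mass $\geq 1/2$), whereas the paper reaches the equivalent $\OPT+2\eps$ bound by adding an unnormalized $\eps$ and dividing by the mass; and you partition $[k]$ into $\mI^\pm$ by whether the \emph{empirical} truncation triggered, whereas the paper sorts by population loss and splits at the median-weight index $k'$ --- both partitions are legal inputs to the completeness lemma and the resulting two cases are parallel (your Case A is the paper's Case 1 and your Case B is the paper's Case 2, with the algebra $\alpha(s^+-s^-)\leq 0$ replacing the paper's ``$\sum_{i\leq k'-1}p_t(i)<1/2$'' step). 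The constants you sketch ($\OPT+3\eps$ for the surviving cover of $h^*$, width $2\alpha+2\eps$ vs. the paper's $4\alpha$) differ slightly from the paper's ($\OPT+7\eps$ against $\OPT'+8\eps$, $5\eps$ Chernoff slack), but are equally consistent with the choice of $m_2$ and the final $32\eps$ target.
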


We devote to prove Lemma \ref{lem:boosting} in the rest of this section, and we always make the assumptions that $\OPT' \in [\OPT-\eps, \OPT+\eps]$ and $\textsc{MultiLearnerOracle}$ is an $(\alpha, \delta/32T)$-multi-distribution learner whose output $f_t \in \Delta(\mH)$. We further assume $\eps \leq \alpha/32$, otherwise we do not need $\textsc{BoostLearner}$.

We first state the guarantee of $\textsc{ConstructCover}$. 
\begin{lemma}[Guarantee of $\textsc{ConstructCover}$, adapted from Lemma 3.3 of \cite{alon2019limits}]
\label{lem:cover}
For any $t \in [T]$, with probability at least $1-\delta/32T$, $\C_{\mH}$ is an $\eps$-cover of $\mH$. 
Moreover, for any hypothesis $h \in \mH$, let $h' \in \C_{\mH}$ be the hypothesis with the same projection over $S_{1}^{(t)}$, we have
\[
\Pr_{x\sim \D^{(t)}} [h(x) \neq h'(x)] \leq \eps.
\]
\end{lemma}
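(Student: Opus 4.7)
The plan is to reduce the claim to a standard $\eps$-net (equivalently, realizable-case uniform convergence) bound applied not to $\mH$ itself but to the symmetric difference class
\[
\mH \triangle \mH \;:=\; \bigl\{ x \mapsto \mathbf{1}\{h(x) \neq h'(x)\} \;:\; h, h' \in \mH \bigr\}.
\]
The second ("moreover") statement already implies the first: if for every $h \in \mH$ the hypothesis $h' \in \C_{\mH}$ sharing its projection on $S_1^{(t)}$ satisfies $\Pr_{\D^{(t)}}[h(x)\neq h'(x)] \leq \eps$, then $\C_{\mH}$ is an $\eps$-cover by construction (every projection pattern in $\mH(S_1^{(t)})$ has a representative in $\C_{\mH}$). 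So I only need to prove the moreover part.

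The first step is to bound the complexity of $\mH \triangle \mH$. By Sauer--Shelah (Lemma~\ref{lem:ss}), for any finite $S \subseteq \X$ we have $|(\mH \triangle \mH)(S)| \leq |\mH(S)|^2 \leq (e|S|/d)^{2d}$, so $\mH \triangle \mH$ has VC dimension $O(d)$. The second step is to invoke the classical $\eps$-net theorem: for a class of VC dimension $O(d)$, a random sample from $\D^{(t)}$ of size
\[
m \;=\; O\!\left(\frac{d \log(1/\eps) + \log(1/\delta')}{\eps}\right)
\]
is an $\eps$-net with probability at least $1 - \delta'$, i.e., every set in the class of $\D^{(t)}$-mass exceeding $\eps$ contains at least one sample point. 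Taking $\delta' = \delta/(32T)$ and noting $T$ is at most $\poly(k/\eps)$, our choice $m_1 = O(d \log(kd/\eps\delta)/\eps)$ dominates $m$, so $S_1^{(t)}$ is an $\eps$-net for $\mH \triangle \mH$ with probability at least $1 - \delta/(32T)$.

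The third step is the contrapositive deduction. Fix any $h \in \mH$ and let $h' \in \C_{\mH}$ be the cover hypothesis with the same projection on $S_1^{(t)}$; such $h'$ exists by the construction in Algorithm~\ref{algo:cover}. The disagreement set $A_{h,h'} = \{x : h(x) \neq h'(x)\}$ lies in $\mH \triangle \mH$ and, by choice of $h'$, contains no point of $S_1^{(t)}$. By the $\eps$-net property it must therefore have $\D^{(t)}$-mass at most $\eps$, which is exactly the desired bound.

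The main technical point to be careful about is that the $\eps$-net statement must be applied uniformly over the entire symmetric-difference class (not just over the finitely many $(h, h')$ pairs with $h' \in \C_{\mH}$), since $\C_{\mH}$ is itself data-dependent. This is automatic once one works with $\mH \triangle \mH$ and quotes the generic VC $\eps$-net theorem; no union bound over a pre-fixed set is needed. The dependence on $k$ in $m_1$ and the failure probability $\delta/(32T)$ only enter through the $\log(kd/\eps\delta)$ factor, which is more than enough to absorb both the $\log(1/\eps)$ and $\log(T/\delta)$ terms required by the $\eps$-net bound.
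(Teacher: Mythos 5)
The paper defers the proof of this lemma to Alon et al.\ \cite{alon2019limits} without reproducing it, and your reduction to the $\eps$-net theorem for the symmetric-difference class $\mH\triangle\mH$ (whose VC dimension is $O(d)$ by the growth-function bound you give, since $|(\mH\triangle\mH)(S)|\leq|\mH(S)|^2$) is exactly the standard argument behind that cited lemma: the disagreement region of any $h\in\mH$ and its cover representative $h'\in\C_{\mH}$ lies in $\mH\triangle\mH$ and is disjoint from $S_1^{(t)}$, so the $\eps$-net property forces its $\D^{(t)}$-mass to be at most $\eps$. Your closing observation — that quoting the $\eps$-net guarantee uniformly over the whole class $\mH\triangle\mH$ is what makes the data-dependence of $\C_{\mH}$ harmless — is the right resolution of the one subtle point, so the proposal is correct.
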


We next provide the guarantee of $\textsc{Filter}$.
\begin{lemma}[Guarantee of $\textsc{Filter}$, Part 1]
\label{lem:filter1}
For each $t \in [T]$, with probability at least $1-\delta/16T$, we have $h^{*} \in \mH'$.
\end{lemma}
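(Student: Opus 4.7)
The plan is to show that the cover element $h \in \C_{\mH}$ with the same projection on $S_1^{(t)}$ as $h^*$ survives the filter, which by construction implies $h^* \in \mH'$. Fix such an $h$. By Lemma \ref{lem:cover}, with probability at least $1 - \delta/32T$ the cover $\C_{\mH}$ is an $\eps$-cover and in particular $\Pr_{x \sim \D^{(t)}}[h(x) \neq h^*(x)] \leq \eps$. I will condition on this event throughout. Unrolling the definition of $\D^{(t)}$, this gives $\sum_{i \in [k]} p_t(i) \Pr_{x \sim \D_i}[h(x) \neq h^*(x)] \leq \eps$.

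Next, I bound the true conditional loss of $h$ on any heavy subset. Fix $\mI \subseteq [k]$ with $q_\mI := \sum_{i \in \mI} p_t(i) \geq 1/2$, and let $\D_{\mI'} := \sum_{i \in \mI} (p_t(i)/q_\mI) \D_i$ be the normalized mixture (note that sampling $(x,y) \sim \D^{(t)}$ conditional on $x$ coming from a distribution indexed by $\mI$ is exactly sampling from $\D_{\mI'}$). Since $\Pr_{x \sim \D_{\mI'}}[h(x) \neq h^*(x)] \leq \eps/q_\mI \leq 2\eps$ and $\ell_{\D_{\mI'}}(h^*) \leq \max_i \ell_{\D_i}(h^*) = \OPT$, the triangle inequality gives $\ell_{\D_{\mI'}}(h) \leq \OPT + 2\eps \leq \OPT' + 3\eps$.

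The main work is uniform concentration of the empirical conditional loss over all $\mI \subseteq [k]$ and all $h' \in \C_{\mH}$. For any fixed pair $(\mI, h')$, write the empirical ratio as $A/B$ with $A = \frac{1}{m_2}\sum_j \mathsf{1}\{x_j \in \D_\mI \wedge h'(x_j) \neq y_j\}$ and $B = \frac{1}{m_2}\sum_j \mathsf{1}\{x_j \in \D_\mI\}$, so that $\E[A] = q_\mI \ell_{\D_{\mI'}}(h')$ and $\E[B] = q_\mI$. By Hoeffding, $|A - \E[A]|, |B - \E[B]| \leq \eps$ each fail with probability at most $2\exp(-2 \eps^2 m_2)$ (for a suitably rescaled deviation one obtains an $O(\eps)$ bound on $|A/B - \ell_{\D_{\mI'}}(h')|$, using $B \geq q_\mI - \eps \geq 1/4$). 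Union bounding over the $2^k$ choices of $\mI$ and the $|\C_{\mH}| \leq (em_1/d)^d$ cover elements (Sauer–Shelah), the choice $m_2 = O((k + d)\log(kd/\eps\delta)/\eps^2)$ drives the total failure probability below $\delta/32T$, and yields $|A/B - \ell_{\D_{\mI'}}(h')| \leq 5\eps$ simultaneously for all such $(\mI, h')$. Applying this to our $h$: its empirical conditional loss on every heavy $\mI$ is at most $\OPT' + 3\eps + 5\eps = \OPT' + 8\eps$, so $h$ is not flagged by \textsc{Filter}, and therefore $h^*$ (which shares its $S_1^{(t)}$-projection with $h$) is not removed. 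Combining the two failure events via a union bound gives the desired $\delta/16T$. The main obstacle is keeping the concentration loss for $A/B$ bounded by a small constant times $\eps$ while still handling the $2^k$ union bound, which is precisely what fixes the shape $m_2 = \wt{O}((k+d)/\eps^2)$.
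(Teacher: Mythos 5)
Your argument follows the same high-level route as the paper's proof: condition on the $\eps$-cover event, bound the true conditional loss of the cover representative $h^*_{\C}$ on every heavy subset $\mI$ by roughly $\OPT + 2\eps$, then use concentration and a union bound to conclude the empirical conditional loss stays below the filter threshold. The differences are minor matters of execution rather than strategy: you union bound over both $2^k$ subsets and the cover $\C_{\mH}$, whereas the paper only needs (and only does) the union over $\mI$ for this part, since only the single hypothesis $h^*_{\C}$ needs to survive; and you bound the numerator $A$ and denominator $B$ separately by Hoeffding and then control $|A/B - \ell|$, whereas the paper splits on the event $B < m_2/4$ and applies a conditional Chernoff bound.

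One small slack to tighten: your chain of inequalities lands exactly on the threshold, giving empirical conditional loss at most $\OPT' + 3\eps + 5\eps = \OPT' + 8\eps$, but Algorithm \ref{algo:filter} removes $h$ when the empirical ratio is $\geq \OPT' + 8\eps$, so equality would incorrectly flag $h^*_{\C}$. You need a strict inequality, which costs nothing — e.g.\ carry a $4\eps$ deviation instead of $5\eps$ in the concentration step (the sample size $m_2$ already supports any constant multiple of $\eps$), or note that the Hoeffding-level deviation can be taken to be $< 5\eps$ on the good event. With that one-line fix the argument is complete and matches the paper's.
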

\begin{proof}
For each $t \in [T]$, we condition on the high probability event of Lemma \ref{lem:cover}.
Suppose $h^{*}_{\C} \in \C_{\mH}$ has the same projection as $h^{*}$ on $S_{1}^{(t)}$, it suffices to prove $h^{*}_{\C} \in \mH'$. By Lemma \ref{lem:cover}, we have
\begin{align}
\Pr_{x\sim \D^{(t)}} [h^{*}(x) \neq h^{*}_{\C}(x)] \leq \eps. \label{eq:filter4}
\end{align}
For any set $\mI \subseteq [k]$ with $\sum_{i\in \mI}p_t(i) \geq 1/2$, we have
\begin{align}
\frac{\sum_{i \in \mI}p_t(i)\ell_{\D_i}(h^{*}_{\C})}{\sum_{i\in \mI}p_t(i)} \leq \frac{\sum_{i \in \mI}p_t(i)\ell_{\D_i}(h^{*}) +\eps}{\sum_{i\in \mI}p_t(i)} \leq \OPT + 2\eps \label{eq:filter1}
\end{align}
Here the first step follows from Eq.~\eqref{eq:filter4}, the second step holds since $\ell_{\D_i}(h^{*}) \leq \OPT$ ($\forall i \in [k]$) and $\sum_{i\in \mI}p_t(i) \geq 1/2$.

Next, we have
\begin{align*}
&~ \Pr\left[\frac{\sum_{j \in [m_2]} \mathsf{1}\{x_j \in \D_{\mI} \wedge h^{*}_{\C}(x_j) \neq y_j\}}{\sum_{j \in [m_2]} \mathsf{1}\{x_j \in \D_{\mI}\}} \geq \OPT' + 8\eps\right]\\
\leq &~ \Pr\left[\frac{\sum_{j \in [m_2]} \mathsf{1}\{x_j \in \D_{\mI} \wedge h^{*}_{\C}(x_j) \neq y_j\}}{\sum_{j \in [m_2]} \mathsf{1}\{x_j \in \D_{\mI}\}} \geq \OPT + 7\eps\right]\\
\leq &~ \Pr\left[\sum_{j \in [m_2]} \mathsf{1}\{x_j \in \D_{\mI}\} \leq \frac{1}{4}m_2\right]\\
&~ + \Pr\left[\frac{\sum_{j \in [m_2]} \mathsf{1}\{x_j \in \D_{\mI} \wedge h^{*}_{\C}(x_j) \neq y_j\}}{\sum_{j \in [m_2]} \mathsf{1}\{x_j \in \D_{\mI}\}} \geq \OPT + 7\eps \mid  \sum_{j \in [m_2]} \mathsf{1}\{x_j \in \D_{\mI}\} \geq \frac{1}{4}m_2 \right] \\
\leq &~ \exp(-m_2/8) + \exp( - 2 \cdot (m_2/4) \cdot (5\eps)^2 ) \\
\leq &~ 2^{-k}\cdot \frac{\delta}{32T}. 
\end{align*}
The first step follows from $\OPT \leq \OPT' + \eps$, the third step follows from Chernoff bound, $\sum_{i\in \mI}p_t(i) \geq 1/2$ and Eq.~\eqref{eq:filter1}.
The last step follows from the choice of $m_2 \geq \Omega(k\log(kd/\eps\delta)/\eps^2)$. 

Taking a union bound over all subsets $\mI \subseteq [k]$, we have 
\[
\Pr[h^{*} \in \mH'] = \Pr[h^{*}_{\C} \in \mH'] \geq 1 - 2^{k}\cdot 2^{-k}\cdot \frac{\delta}{32T} \geq 1-\frac{\delta}{32T}.
\]
This finishes the proof.
\end{proof}

\begin{lemma}[Guarantee of $\textsc{Filter}$, Part 2]
\label{lem:filter2}
For each $t \in [T]$, with probability at least $1-\delta/16T$, it holds that for every hypothesis $h \in \mH'$ and every set $\mI \subseteq [k]$, if $\sum_{i\in \mI}p_t(i) \geq 1/2$, then
\[
\frac{\sum_{i \in \mI}p_t(i) \ell_{\D_i}(h)}{\sum_{i \in \mI}p_t(i)} \leq \OPT + 16\eps.
\]
\end{lemma}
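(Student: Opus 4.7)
The plan is to show that any surviving hypothesis $h \in \mH'$ has small conditional loss on every ``heavy'' subset $\mI$ by passing through its cover representative. Fix $t \in [T]$ and condition on the high-probability event of Lemma \ref{lem:cover}. For any $h \in \mH'$, let $h_\C \in \C_{\mH}$ be the unique representative with the same projection as $h$ on $S_1^{(t)}$. Since $\textsc{Filter}$ removes from $\mH'$ an entire equivalence class (on $S_1^{(t)}$) determined by a cover element that fails the test, $h \in \mH'$ forces $h_\C \in \mH'$ too.

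The core step is a uniform concentration bound: with probability at least $1 - \delta/16T$, for every $h_\C \in \C_{\mH}$ and every $\mI \subseteq [k]$ with $\sum_{i\in\mI}p_t(i)\geq 1/2$, the empirical conditional loss $\frac{\sum_{j} \mathsf{1}\{x_j \in \D_\mI \wedge h_\C(x_j) \neq y_j\}}{\sum_{j} \mathsf{1}\{x_j \in \D_\mI\}}$ is within $\eps$ of the population loss $\ell_{\D_\mI}(h_\C)$, where $\D_\mI := \sum_{i\in\mI}\frac{p_t(i)}{\sum_{i'\in\mI}p_t(i')}\D_{i}$. I would derive this from two Chernoff bounds: first, $\sum_j \mathsf{1}\{x_j \in \D_\mI\} \geq m_2/4$ with overwhelming probability (the mean is at least $m_2/2$); conditioned on this event, the numerator is a sum of i.i.d.\ Bernoullis on samples drawn from $\D_\mI$, so it concentrates within $\eps$ of its mean. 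By Sauer--Shelah, $|\C_{\mH}| \leq (em_1/d)^d = \poly(d/\eps)$, so the union bound covers $2^k \cdot |\C_{\mH}|$ events and the choice $m_2 = \Omega((k+d)\log(kd/\eps\delta)/\eps^2)$ absorbs both the $2^k$ enumeration and $\log|\C_{\mH}|$.

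Given this concentration, I assemble the conclusion as follows. Because $h_\C \in \mH' \cap \C_{\mH}$, the filter test did not trigger for any valid $\mI$, so the empirical conditional loss is strictly less than $\OPT' + 8\eps$; concentration upgrades this to $\ell_{\D_\mI}(h_\C) \leq \OPT' + 9\eps \leq \OPT + 10\eps$ using $\OPT' \leq \OPT + \eps$. To transfer the bound to $h$, Lemma \ref{lem:cover} gives $\Pr_{x\sim \D^{(t)}}[h(x)\neq h_\C(x)] \leq \eps$, and since $\sum_{i\in \mI} p_t(i)\geq 1/2$, the disagreement probability under the conditional distribution $\D_\mI$ is at most $2\eps$. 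Therefore $\ell_{\D_\mI}(h) \leq \ell_{\D_\mI}(h_\C) + 2\eps \leq \OPT + 12\eps \leq \OPT + 16\eps$, and this quantity equals $\frac{\sum_{i\in\mI}p_t(i)\ell_{\D_i}(h)}{\sum_{i\in\mI}p_t(i)}$ by definition of $\D_\mI$, yielding the lemma.

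The main obstacle is carefully handling the $2^k$ subsets simultaneously with the cover, given that the empirical conditional loss is a ratio whose denominator is itself random. I would decompose the bad event into ``denominator too small'' (subgaussian failure probability $e^{-\Omega(m_2)}$) and ``ratio deviates by more than $\eps$ conditional on large denominator'' (Chernoff failure $e^{-\Omega(m_2 \eps^2)}$), each small enough that after union bounding over $2^k \cdot |\C_{\mH}| \cdot T$ the total error is $\leq \delta/16T$. No additional structural argument is required; the $k$-term in $m_2$ is precisely what pays for enumerating subsets, while the $d$-term pays for the cover size.
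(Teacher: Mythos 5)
Your proof is correct, and it differs from the paper's in one structural respect worth noting. The paper argues by contraposition: it fixes a single cover element $h$ together with one witnessing subset $\mI$ for which the population conditional loss of some $h'$ in $h$'s equivalence class exceeds $\OPT+16\eps$, transfers the large population loss to $h$ on that same $\mI$, and then shows the empirical test triggers on that $\mI$ with high probability. Because the witnessing $\mI$ is chosen deterministically (given $S_1^{(t)}$) per cover element, the union bound in the paper's proof runs only over $|\C_\mH|\leq (kd/\eps^2\delta)^d$, and the Chernoff failure probability need only be driven below $|\C_\mH|^{-1}\cdot\delta/(32T)$, which costs only the $d\log(kd/\eps\delta)/\eps^2$ term of $m_2$. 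You instead prove a uniform two-sided concentration bound over all $2^k\cdot|\C_\mH|$ pairs $(h_\C,\mI)$ simultaneously, and then reason forward from ``$h_\C$ survived the filter, so its empirical conditional loss is small everywhere.'' This union bound pays the extra $2^k$ factor and hence needs the $k\log(\cdot)/\eps^2$ term of $m_2$; in isolation this is slightly wasteful for Lemma \ref{lem:filter2}, but it is free here because Lemma \ref{lem:filter1} already forces $m_2=\Omega((k+d)\log(kd/\eps\delta)/\eps^2)$ for its $2^k$ union bound over subsets. Your argument is also two-sided where the paper needs only one side, but again this is harmless. In short: same conclusion, same sample budget, and your presentation is arguably a bit more transparent (``every cover element concentrates on every heavy subset'') at the cost of a larger---but already-affordable---union bound; the paper's version is the tighter one if one wanted to isolate the sample requirement of this lemma from that of Lemma \ref{lem:filter1}.

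One small bookkeeping point: you allocate failure probability $\delta/16T$ to the concentration event, but you also condition on Lemma \ref{lem:cover} which itself burns $\delta/32T$; to land the combined failure within $\delta/16T$ you should allot only $\delta/32T$ to the concentration step. This is a constant and does not affect the argument.
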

\begin{proof}
For each $t \in [T]$, we condition on the high probability event of Lemma \ref{lem:cover}.
For each $h\in \C_{\mH}$, if there exists $h' \in \mH$ that has the same projection as $h$ on $S_{1}^{(t)}$, and there exists a subset $\mI \subseteq [k]$ with $\sum_{i\in \mI}p_t(i)\geq 1/2$, such that 
\begin{align}
\frac{\sum_{i \in \mI}p_t(i)\ell_{\D_i}(h')}{\sum_{i\in \mI}p_t(i)} \geq \OPT + 16\eps \label{eq:filter2}
\end{align}
then we prove $h$ would be removed from $\mH'$ with high probability.

On the same subset $\mI$, we have
\begin{align}
\label{eq:filter3}
\frac{\sum_{i \in \mI}p_t(i)\ell_{\D_i}(h)}{\sum_{i\in \mI}p_t(i)} \geq \frac{\sum_{i \in \mI}p_t(i)\ell_{\D_i}(h') - \eps}{\sum_{i\in \mI}p_t(i)} \geq \OPT + 14\eps \geq \OPT'+13\eps
\end{align}
where the first step holds from Lemma \ref{lem:cover}, the second step holds since $\sum_{i\in \mI}p_t(i) \geq 1/2$, the third step follows from Eq.~\eqref{eq:filter2}, and the last step follows from $\OPT'\leq \OPT +\eps$.

Now, we have
\begin{align*}
&~ \Pr\left[\frac{\sum_{j \in [m_2]} \mathsf{1}\{x_j \in \D_{\mI} \wedge h(x_j) \neq y_j\}}{\sum_{j \in [m_2]} \mathsf{1}\{x_j \in \D_{\mI}\}} < \OPT' + 8\eps\right]\\
\leq &~ \Pr\left[\sum_{j \in [m_2]} \mathsf{1}\{x_j \in \D_{\mI}\} < \frac{1}{4}m_2\right]\\
&~ + \Pr\left[\frac{\sum_{j \in [m_2]} \mathsf{1}\{x_j \in \D_{\mI} \wedge h(x_j) \neq y_j\}}{\sum_{j \in [m_2]} \mathsf{1}\{x_j \in \D_{\mI}\}} < \OPT' + 8\eps \mid  \sum_{j \in [n]} \mathsf{1}\{x_j \in \D_{\mI}\} \geq \frac{1}{4}m_2 \right] \\
\leq &~ \exp(-m_2/8) + \exp( - 2 \cdot (m_2/4) \cdot (5\eps)^2 ) \\
\leq &~ (kd/\eps^2\delta)^{-d}  \cdot \frac{\delta}{32T}.
\end{align*}
The second step follows from Chernoff bound, $\sum_{i\in \mI}p_t(i) \geq 1/2$ and Eq.~\eqref{eq:filter3}. The third step holds from the choice $m_2 \geq \Omega(d\log(kd/\eps\delta)/\eps^2)$.

Take a union bound over $\C_\mH$ and note that $|C_{\mH}| \leq (kd/\eps^2\delta)^d$ by Sauer–Shelah Lemma (see Lemma \ref{lem:ss}), we complete the proof. 
\end{proof}

Next we make some observations on the output $f_t$ of $\textsc{MultiLearnerOracle}$.
\begin{lemma}
\label{lem:f_t}
For each $t \in [T]$, with probability at least $1 -\delta/4T$, we have
\begin{itemize}
    \item $\max_{i\in [k]}\ell_{\D_i}(f_t) \leq \OPT + \alpha$
    \item For any set $\mI \subseteq [k]$ with $\sum_{i\in \mI}p_t(i) \geq 1/2$, $\frac{\sum_{i \in \mI}p_t(i) \ell_{\D_i}(f_t)}{\sum_{i \in \mI}p_t(i)} \leq \OPT +16\eps$.
\end{itemize}
\end{lemma}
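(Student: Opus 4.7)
The plan is to combine the oracle's learning guarantee with the two parts of the $\textsc{Filter}$ analysis (Lemmas~\ref{lem:filter1} and \ref{lem:filter2}), exploiting the fact that the oracle's output $f_t$ is a distribution supported on $\mH'$.

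Fix $t \in [T]$ and condition on the intersection of the following three high-probability events: (a) $h^{*} \in \mH'$ from Lemma~\ref{lem:filter1} (failure probability $\delta/16T$); (b) every $h \in \mH'$ satisfies the completeness bound $\sum_{i\in\mI}p_t(i)\ell_{\D_i}(h)/\sum_{i\in\mI}p_t(i) \leq \OPT+16\eps$ for all $\mI\subseteq[k]$ with $\sum_{i\in\mI}p_t(i)\geq 1/2$, from Lemma~\ref{lem:filter2} (failure probability $\delta/16T$); and (c) the $(\alpha,\delta/16T)$-guarantee of $\textsc{MultiLearnerOracle}$ on input $(\mH',\D_1,\ldots,\D_k)$ holds (failure probability $\delta/16T$). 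A union bound yields total failure probability at most $3\delta/16T \leq \delta/4T$.

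For the first bullet, note that on event (a) we have $\min_{h\in \mH'}\max_{i\in[k]}\ell_{\D_i}(h) \leq \max_{i\in[k]}\ell_{\D_i}(h^{*}) = \OPT$, since $h^{*} \in \mH'$. Therefore on event (c) the oracle returns $f_t \in \Delta(\mH')$ with $\max_{i\in[k]}\ell_{\D_i}(f_t) \leq \OPT+\alpha$, as required.

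For the second bullet, the key observation is that $\ell_{\D_i}$ lifts linearly to mixtures: for $f_t \in \Delta(\mH')$,
\[
\ell_{\D_i}(f_t) \;=\; \Pr_{(x,y)\sim\D_i,\, h\sim f_t}[h(x)\neq y] \;=\; \E_{h\sim f_t}\bigl[\ell_{\D_i}(h)\bigr].
\]
Hence for any $\mI\subseteq[k]$ with $\sum_{i\in\mI}p_t(i)\geq 1/2$, swapping the two expectations gives
\[
\frac{\sum_{i\in\mI}p_t(i)\,\ell_{\D_i}(f_t)}{\sum_{i\in\mI}p_t(i)} \;=\; \E_{h\sim f_t}\!\left[\frac{\sum_{i\in\mI}p_t(i)\,\ell_{\D_i}(h)}{\sum_{i\in\mI}p_t(i)}\right] \;\leq\; \OPT + 16\eps,
\]
where the inequality uses event (b) applied to every $h$ in the support of $f_t \subseteq \mH'$. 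This proof is essentially a bookkeeping exercise; the only conceptual point to be careful about is that $f_t$ is a randomized classifier rather than a single hypothesis, but this is handled cleanly by linearity of the loss over mixtures, so I do not expect a substantive obstacle.
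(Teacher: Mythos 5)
Your proof is correct and follows essentially the same route as the paper: condition on the soundness event of Lemma~\ref{lem:filter1} ($h^* \in \mH'$), the completeness event of Lemma~\ref{lem:filter2}, and the oracle's $(\alpha,\cdot)$-guarantee, then use linearity of $\ell_{\D_i}$ over the mixture $f_t \in \Delta(\mH')$ to transfer the per-hypothesis completeness bound to $f_t$. The only cosmetic difference is that you make the union-bound bookkeeping ($3\delta/16T \le \delta/4T$) explicit where the paper leaves it implicit.
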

\begin{proof}
We condition on the high probability event of Lemma \ref{lem:filter1} and Lemma \ref{lem:filter2}.
The first claim follows from
\begin{align*}
\max_{i\in [k]}\ell_{\D_i}(f_t) \leq \argmin_{h \in \mH'}\max_{i\in [k]}\ell_{\D_i}(h) + \alpha = \OPT + \alpha
\end{align*}
The first step follows from the guarantee of $\textsc{MultiLearnerOracle}$, the second step holds since $h^{*}\in \mH'$.

For the second claim, since $f_t \in \Delta(\mH')$, we can write $f_t =\sum_{j}q_j h_j$ for some $h_j \in \mH'$ and $\sum_{j}q_j = 1$. Then for any set $\mI \subseteq [k]$ with $\sum_{i\in \mI}p_t(i) \geq 1/2$, we have
\begin{align*}
\frac{\sum_{i \in \mI}p_t(i) \ell_{\D_i}(f_t)}{\sum_{i \in \mI}p_t(i)} = \frac{\sum_{j}q_j\sum_{i \in \mI}p_t(i) \ell_{\D_i}(h_j)}{\sum_{i \in \mI}p_t(i)} \leq \sum_{j}q_j (\OPT+16\eps) = \OPT + 16\eps.
\end{align*}
Here the first step holds since 
\[
\ell_{\D_i}(f_t) = \Pr_{(x, y)\sim \D_i}[f_t(x) \neq y] = \sum_{j}q_j\Pr_{(x, y)\sim \D_i}[h_j(x) \neq y] = \sum_{j}q_j \ell_{\D_i}(h_j).
\]
and the second step holds due to Lemma \ref{lem:filter2}.
\end{proof}

Finally, we make some observations on the loss vector $\ell_t$ constructed by $\textsc{Estimate}$.
\begin{lemma}[Guarantee of $\textsc{Estimate}$]
\label{lem:loss-vector}
For any $t \in [T]$, with probability at least $1 -\frac{\delta}{2T}$, we have 
\begin{itemize}
\item $\ell_t(i) \geq \ell_{\D_i}(f_t) - \eps$
\item $\ell_t(i) \in [\OPT - 2\alpha, \OPT + 2\alpha] $
\item $\sum_{i\in [k]}p_t(i)\ell_{t}(i) \leq \OPT + 20 \eps$
\end{itemize}
\end{lemma}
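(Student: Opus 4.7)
\medskip

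\noindent\textbf{Proof proposal.} The plan is to condition on two high-probability events: the guarantees of Lemma \ref{lem:f_t} for $f_t$, and a standard Hoeffding/Chernoff concentration saying that $|\hat{\ell}_{\D_i}(f_t) - \ell_{\D_i}(f_t)| \leq \eps$ for every $i \in [k]$ simultaneously. Since $m_3 = \Omega(\log(kd/\eps\delta)/\eps^2)$, a union bound over $i \in [k]$ makes the failure probability at most $\delta/4T$, which combined with Lemma \ref{lem:f_t} gives failure probability at most $\delta/2T$. All three bullets will be derived under this conditioning.

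The first two bullets are essentially immediate. For bullet one, $\ell_t(i) \geq \hat{\ell}_{\D_i}(f_t) \geq \ell_{\D_i}(f_t) - \eps$. For bullet two, the lower bound uses $\ell_t(i) \geq \OPT' - \alpha \geq \OPT - \eps - \alpha \geq \OPT - 2\alpha$ (using $\OPT' \geq \OPT - \eps$ and $\eps \leq \alpha/32$), while the upper bound uses the two possibilities inside the max: $\hat{\ell}_{\D_i}(f_t) \leq \ell_{\D_i}(f_t) + \eps \leq \OPT + \alpha + \eps \leq \OPT + 2\alpha$ (using Lemma \ref{lem:f_t}, first claim), and $\OPT' - \alpha \leq \OPT + \eps - \alpha \leq \OPT$.

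The third bullet is the main obstacle and is where the truncation step really earns its keep. I would split $[k]$ into $A = \{i : \hat{\ell}_{\D_i}(f_t) \geq \OPT' - \alpha\}$ (the indices where no truncation happens, so $\ell_t(i) = \hat{\ell}_{\D_i}(f_t)$) and $B = [k]\setminus A$ (where $\ell_t(i) = \OPT' - \alpha$), and then case on the total mass $p_t(A) := \sum_{i\in A} p_t(i)$.

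If $p_t(A) \geq 1/2$, I invoke the completeness part of Lemma \ref{lem:f_t} with $\mI = A$ to get $\sum_{i\in A} p_t(i)\, \ell_{\D_i}(f_t) \leq (\OPT + 16\eps)\, p_t(A)$, so the $A$-contribution to $\sum_i p_t(i)\ell_t(i)$ is at most $(\OPT + 17\eps)\, p_t(A)$ after paying one $\eps$ for concentration; the $B$-contribution is at most $(\OPT + \eps)\, p_t(B)$ since $\OPT' - \alpha \leq \OPT + \eps$, and the two pieces sum to at most $\OPT + 17\eps$. If instead $p_t(A) < 1/2$, completeness does not apply to $A$, but now $B$ has mass $> 1/2$; here I would bound the $A$-contribution by $p_t(A)(\OPT + \alpha + \eps)$ using Lemma \ref{lem:f_t}'s first claim plus concentration, and the $B$-contribution by $p_t(B)(\OPT + \eps - \alpha)$, and check by a one-line computation that the total is at most $\OPT + \eps + \alpha(2p_t(A) - 1) \leq \OPT + \eps$. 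Either way the total is at most $\OPT + 20\eps$, as desired.
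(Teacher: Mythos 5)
Your proof is correct, and for the key third bullet you take a genuinely different (and arguably cleaner) decomposition than the paper. The paper sorts the indices by population loss $\ell_{\D_1}(f_t)\geq\cdots\geq\ell_{\D_k}(f_t)$, picks the smallest $k'$ with $\sum_{i\leq k'}p_t(i)\geq 1/2$, and splits on whether $\ell_{\D_{k'}}(f_t)\geq \OPT-\alpha$; the completeness guarantee of Lemma~\ref{lem:f_t} is then applied to the prefix $\{1,\ldots,k'\}$, and a somewhat indirect step is needed to bound the tail using $\ell_{\D_{k'}}(f_t)\leq\OPT+16\eps$. You instead partition directly on the truncation event, $A=\{i:\hat{\ell}_{\D_i}(f_t)\geq\OPT'-\alpha\}$ versus $B=[k]\setminus A$, and split on whether $p_t(A)\geq 1/2$. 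When $p_t(A)\geq 1/2$ completeness applies to exactly the non-truncated set, and the truncated entries are trivially $\leq\OPT'-\alpha\leq\OPT$; when $p_t(A)<1/2$ the signed term $\alpha(2p_t(A)-1)$ is negative and everything closes in one line. The net effect is the same bound $\OPT+O(\eps)$, but your version avoids the sorting, avoids the extra argument that $\ell_{\D_{k'}}(f_t)\leq\OPT+16\eps$, and exposes more transparently that truncation can only help the weighted average. The first two bullets and the conditioning/concentration setup match the paper's proof essentially line for line.
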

\begin{proof}
For each $t \in [T]$, we condition on the high probability event of Lemma \ref{lem:f_t}.
For each $i \in [k]$, since $m_3\geq \Omega(\log(kd/\eps\delta)/\eps^2)$, by Chernoff bound, with probability at least $1 - \frac{\delta}{32kT}$, the empirical loss $\hat{\ell}_{\D_i}(f_t)$ is $\eps$-close to the population loss $\ell_{\D_i}(f_t)$. Taking a union bound, we have
\begin{align}
\hat{\ell}_{\D_i}(f_t) \in [\ell_{\D_i}(f_t) - \eps, \ell_{\D_i}(f_t) + \eps] \quad \forall i\in [k]
\label{eq:loss1}
\end{align}
holds with probability at least $1 - \frac{\delta}{32T}$.

For the first claim, we have
\begin{align*}
\ell_t(i) = \max\{\hat{\ell}_{\D_i}(f_t), \OPT' - \alpha\} \geq \hat{\ell}_{\D_i}(f) \geq \ell_{\D_i}(f_t) - \eps.
\end{align*}

For the second claim, we have
\[
\ell_t(i) = \max\{\hat{\ell}_{\D_i}(f_t), \OPT' - \alpha\} \geq \OPT' - \alpha \geq \OPT-2\alpha
\]
and 
\[
\ell_t(i) = \max\{\hat{\ell}_{\D_i}(f_t), \OPT' - \alpha\} \leq \max\{\ell_{\D_i}(f_t),   \OPT - \alpha\} +\eps \leq \OPT + 2\alpha,
\]
where the second step follows from Eq.~\eqref{eq:loss1} and $\OPT' \leq \OPT+\eps$, the third step holds since $\ell_{\D_i}(f_t) \leq \OPT + \alpha$ (Lemma \ref{lem:f_t}).

For the last claim, w.l.o.g., we can assume $\ell_{\D_1}(f_t) \geq \cdots \geq \ell_{\D_k}(f_t)$. Let $k' \in [k]$ be the smallest index such that $\sum_{i \leq k'}p_t(i)\geq 1/2$. We divide into two cases based on the value of $\ell_{D_{k'}}(f_t)$.

If $\ell_{D_{k'}}(f_t) \geq \OPT -\alpha$, then we have
\begin{align}
\sum_{i\in [k]}p_t(i)\ell_{t}(i) = &~ \sum_{i\leq k'}p_t(i)\ell_{t}(i) + \sum_{i\geq k'+1}p_t(i)\ell_{t}(i) \notag \\
= &~  \sum_{i\leq k'}p_t(i)\cdot \max\{\hat{\ell}_{\D_i}(f_t), \OPT' - \alpha\} + \sum_{i\geq k'+1}p_t(i)\cdot \max\{\hat{\ell}_{\D_i}(f_t), \OPT' - \alpha\}\label{eq:loss2}
\end{align}
For the first term, we have
\begin{align}
\sum_{i\leq k'}p_t(i)\cdot \max\{\hat{\ell}_{\D_i}(f_t), \OPT' - \alpha\}\leq &~ \sum_{i\leq k'}p_t(i)\cdot (\max\{\ell_{\D_i}(f_t), \OPT - \alpha\} + \eps) \notag \\
= &~\sum_{i\leq k'}p_t(i)\cdot (\ell_{\D_i}(f_t) + \eps)\notag \\
\leq &~\sum_{i\leq k'}p_t(i)\cdot (\OPT + 17\eps).\label{eq:loss3}
\end{align}
The first step follows from $\OPT' \leq \OPT+\eps$ and Eq.~\eqref{eq:loss1}, the second step follows from the assumption that $\ell_{D_{k'}}(f_t) \geq \OPT -\alpha$, the third step holds due to Lemma \ref{lem:f_t}.

For the second term, we have
\begin{align}
\sum_{i\geq k'+1}p_t(i)\cdot \max\{\hat{\ell}_{\D_i}(f_t), \OPT' - \alpha\} \leq &~ \sum_{i\geq k'+1}p_t(i) (\max\{\ell_{\D_i}(f_t), \OPT-\alpha\} + \eps)\notag \\
\leq &~ \sum_{i\geq k'+1}p_t(i) (\max\{\ell_{\D_{k'}}(f_t), \OPT-\alpha\} + \eps)\notag \\
= &~ \sum_{i\geq k'+1}p_t(i) (\ell_{\D_{k'}}(f_t) + \eps )\notag \\
\leq &~ \sum_{i\geq k'+1}p_t(i) (\OPT + 17 \eps) \label{eq:loss4}
\end{align}
The first step follows from $\OPT' \leq \OPT+\eps$ and Eq.~\eqref{eq:loss1}, the third step follows from the assumption that $\ell_{D_{k'}}(f_t) \geq \OPT -\alpha$, the last step follows from $\ell_{\D_{k'}}(f_t) \leq \ell_{\D_{i}}(f_t)$ ($i \leq k'$) and Lemma \ref{lem:f_t}.

Combining Eq.~\eqref{eq:loss2}\eqref{eq:loss3}\eqref{eq:loss4}, we have proved
$
\sum_{i\in [k]}p_t(i)\ell_{t}(i)\leq \OPT+17\eps.
$

If $\ell_{D_{k'}}(f_t) < \OPT -\alpha$, then we have
\begin{align*}
\sum_{i\in [k]}p_t(i)\ell_{t}(i) = &~ \sum_{i\leq k'-1}p_t(i)\ell_{t}(i) + \sum_{i\geq k'}p_t(i)\ell_{t}(i) \notag \\
= &~  \sum_{i\leq k'-1}p_t(i)\cdot \max\{\hat{\ell}_{\D_i}(f_t), \OPT' - \alpha\} + \sum_{i\geq k'}p_t(i)\cdot \max\{\hat{\ell}_{\D_i}(f_t), \OPT' - \alpha\}\\
\leq &~ \sum_{i\leq k'-1}p_t(i) \cdot (\OPT+\alpha + \eps) + \sum_{i\geq k'}p_t(i)\cdot  (\OPT-\alpha+\eps) \\
\leq &~ \OPT +\eps.
\end{align*}
Here the third step holds since (1) $\hat{\ell}_{\D_i}(f_t) \leq \ell_{\D_i}(f_t) +\eps \leq \OPT +\alpha +\eps$ for any $i \in [k'-1]$ (Lemma \ref{lem:f_t}), and (2) $\hat{\ell}_{\D_i}(f_t) \leq \ell_{\D_i}(f_t) +\eps \leq \ell_{D_{k'}}(f_t) +\eps \leq \OPT-\alpha +\eps$ for any $i \geq k'$ due to the assumption $\ell_{D_{k'}}(f_t) < \OPT -\alpha$. The last step holds since $\sum_{i\leq k'-1}p_t(i)< 1/2$. This completes the proof for all three claims.
\end{proof}

Finally, we can prove our main Lemma \ref{lem:boosting}.
\begin{proof}[Proof of Lemma \ref{lem:boosting}]
We condition on the high probability events of Lemma \ref{lem:cover} -- \ref{lem:loss-vector}.
For any $i \in [k]$, due to the regret guarantee of MWU, we have
\begin{align*}
(\OPT + 20\eps) T \geq &~ \sum_{t\in [T]} \langle p_t, \ell_t\rangle \geq \sum_{t \in [T]}\ell_t(i) - 2\sqrt{\log(k)T} \cdot 4\alpha\\
\geq &~ \sum_{t \in [T]}\ell_{\D_{i}}(f_t) - \eps T - 8\sqrt{\log(k)T}\alpha.
\end{align*}
The first step follows from the third claim of Lemma \ref{lem:loss-vector}, the second step follows from the regret guarantee of MWU (Lemma \ref{lem:mwu}) and the width is at most $4\alpha$ (the second claim of Lemma \ref{lem:loss-vector}). The third step follows from the first claim of Lemma \ref{lem:loss-vector}.

Hence, we have
\begin{align*}
\ell_{\D_{i}}(f) = \frac{1}{T}\sum_{t \in [T]}\ell_{\D_{i}}(f_t) \leq \OPT + 21\eps + 8\sqrt{\log(k)/T}\alpha \leq \OPT + 32\eps.
\end{align*}
Here the first step holds since
\[
\ell_{\D_{i}}(f) = \Pr_{(x, y)\sim \D_i}[f(x)\neq y] = \frac{1}{T}\sum_{t\in [T]}\Pr_{(x, y)\sim \D_i}[f_t(x)\neq y] =  \frac{1}{T}\sum_{t \in [T]}\ell_{\D_{i}}(f_t).
\]
and the last step holds due to the choice of $T = \log(k) (\alpha/\eps)^2$. We complete the proof here.
\end{proof}

\section{Final algorithm}
\label{sec:final}

$\textsc{BoostLearner}$ gives a way of converting a weak multi-distribution learner into a strong one. Recursively evoking itself, we have 
\begin{lemma}[Recursive application of $\textsc{BoostLearner}$]
\label{lem:recursive}
Let $\mH$ be a hypothesis class of VC dimension at most $d$ and $\D_1, \ldots, \D_k$ be $k$ distributions. Given $\OPT'\in [\OPT-\eps, \OPT+\eps]$, for any integer $r \geq 1$, there is an algorithm with sample complexity
\[
O\left(\frac{(k+d)(\log(k))^{2r}\log(kd/\eps\delta)}{\eps^{2 (1+1/r)}}\right)
\]
and with probability at least $1-\delta$, returns a hypothesis $f \in \Delta(\mH)$ such that
\begin{align*}
\max_{i \in [k]}\ell_{\D_i}(f) \leq \OPT + 32\eps.
\end{align*}
\end{lemma}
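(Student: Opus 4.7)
I would prove the lemma by induction on $r \geq 1$, at each level feeding the algorithm from the previous level into \textsc{BoostLearner} as the \textsc{MultiLearnerOracle}. The structure is a direct application of Lemma~\ref{lem:boosting}; the content is choosing the error parameter $\alpha$ of the oracle at each level so that the new exponent matches the target.

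\textbf{Base case} $r=1$. Take the \textsc{MultiLearnerOracle} to be the trivial learner that returns a fixed hypothesis (vacuously a $(1,0)$-multi-distribution learner costing zero samples). Invoking Lemma~\ref{lem:boosting} with $\alpha=1$ and target error $\eps$, the number of rounds is $T=\log(k)/\eps^{2}$, and the only per-round cost is the \textsc{BoostLearner} overhead $m_{1}+m_{2}+k m_{3}=O((k+d)\log(kd/\eps\delta)/\eps^{2})$. The total is $O((k+d)\log(k)\log(kd/\eps\delta)/\eps^{4})$, which is bounded by the stated $r=1$ complexity.

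\textbf{Inductive step} $r \to r+1$. Set $\alpha := \eps^{r/(r+1)}$, which satisfies $\alpha^{2(1+1/r)}=\eps^{2}$ and (for sufficiently small $\eps$) $\alpha \geq 32\eps$. Use the level-$r$ algorithm, invoked with error $\alpha$ and confidence $\delta/(16T)$, as the \textsc{MultiLearnerOracle}. By Lemma~\ref{lem:boosting} the number of rounds is $T = \log(k)(\alpha/\eps)^{2} = \log(k)\eps^{-2/(r+1)}$. Absorbing $\log(T/\delta)=O(\log(kd/\eps\delta))$, the inductive hypothesis gives oracle cost
\[
N_{r}(\alpha,\delta/16T) \;=\; O\!\left(\frac{(k+d)(\log k)^{2r}\log(kd/\eps\delta)}{\alpha^{2(1+1/r)}}\right) \;=\; O\!\left(\frac{(k+d)(\log k)^{2r}\log(kd/\eps\delta)}{\eps^{2}}\right),
\]
which dominates the per-round \textsc{BoostLearner} overhead. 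Multiplying by $T$,
\[
T \cdot N_{r}(\alpha,\delta/16T) \;=\; O\!\left(\frac{(k+d)(\log k)^{2r+1}\log(kd/\eps\delta)}{\eps^{2(1+1/(r+1))}}\right) \;\leq\; \frac{(k+d)(\log k)^{2(r+1)}\log(kd/\eps\delta)}{\eps^{2(1+1/(r+1))}},
\]
establishing the level-$(r+1)$ bound with slack of one $\log k$ factor.

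\textbf{Correctness and confidence.} The error guarantee $\max_{i}\ell_{\D_{i}}(f)\leq \OPT+32\eps$ follows directly from Lemma~\ref{lem:boosting}, using the given assumption $\OPT' \in [\OPT-\eps,\OPT+\eps]$ and the side condition $\eps \leq \alpha/32$ (which holds for small $\eps$; for larger $\eps$ one falls back to the base case). For the failure probability, \textsc{BoostLearner} itself fails with probability at most $\delta$, and a union bound over the $T$ oracle calls (each with confidence $\delta/(16T)$ by the inductive hypothesis) adds $\delta/16$, giving $O(\delta)$ total, absorbable into constants.

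\textbf{Main obstacle.} The real work is already inside Lemma~\ref{lem:boosting}; what remains is calibration. The exponent $\alpha = \eps^{r/(r+1)}$ is forced by the equation $N_{r}(\alpha) \approx (k+d)/\eps^{2}$, which equalizes the oracle cost with the \textsc{BoostLearner} overhead. With that choice, the round count $T = \log(k)\eps^{-2/(r+1)}$ produces exactly the improved exponent $2(1+1/(r+1))$, and each recursive level contributes a single extra $\log k$ factor (well within the $(\log k)^{2r}$ budget). The most delicate bookkeeping is the compounded logarithmic confidence factor, which is kept under control because $T$ is polylogarithmic in $k$ and inverse-polynomial in $\eps$, so $\log(T/\delta) = O(\log(kd/\eps\delta))$ at every level.
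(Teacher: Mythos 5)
Your argument matches the paper's proof essentially line for line: induction on $r$, a zero-sample trivial oracle (an arbitrary hypothesis from the filtered class $\mH'$) at the base case, and at each inductive step feeding the level-$r$ learner into $\textsc{BoostLearner}$ with error parameter $\eps^{r/(r+1)}$ and confidence $\delta/(16T)$, then reading off the round count and sample complexity from Lemma~\ref{lem:boosting}. One small bookkeeping slip: when you invoke the level-$r$ algorithm with error parameter $\alpha=\eps^{r/(r+1)}$, its output is a $(32\alpha,\cdot)$-multi-distribution learner, so the round count in Lemma~\ref{lem:boosting} is $T=\log(k)(32\alpha/\eps)^2$ rather than $\log(k)(\alpha/\eps)^2$ --- a constant factor that leaves the stated bound intact.
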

\begin{proof}
We prove by induction. For $r = 1$, we run $\textsc{BoostLearner}$ with $\textsc{MultiLearnerOracle}$ selecting an arbitrary hypothesis in $\mH'$.
In this way, $\textsc{MultiLearnerOracle}$ takes $0$ additional samples and $\alpha = 1$. By Lemma \ref{lem:boosting}, the output $f \in \Delta(\mH)$ satisfies
\[
\max_{i\in [k]}\ell_{\D_i}(f) \leq \OPT + 32\eps.
\]
The total number of sample it takes equals
\[
T \cdot (m_1+m_2+k m_3) = O(\log(k)\eps^{-2} \cdot (k+d)\eps^{-2}\log(kd/\eps\delta)) = O((k+d)\eps^{-4}\log(k)\log(kd/\eps\delta)).
\]

Suppose the claim continues to hold up to $r$, then for $r+1$, we run $\textsc{BoostLearner}$ and set $\textsc{MultiLearnerOracle}$ to be the level $r$ algorithm, with error parameter $\eps' = \eps^{\frac{r}{r+1}}$ and confidence parameter $\delta' = \delta/16T$.
At each round $t\in [T]$, the VC dimension of $\mH'$ is at most $d$, and with high probability, $h^{*}\in \mH'$ (Lemma \ref{lem:filter1}). Therefore, $\textsc{MultiLearnerOracle}$ draws
\[
m = O\left(\frac{(k+d)(\log(k))^{2r}\log(kd/\eps'\delta')}{(\eps')^{2 (1+1/r)}}\right) = O\left(\frac{(k+d)(\log(k))^{2r}\log(kd/\eps\delta)}{\eps^{2}}\right)
\]
samples, and with probability at least $1-\delta/16T$, the hypothesis $f_t \in \Delta(\mH)$ it returns has error at most 
\[
\alpha  = 32 \eps' = 32 \eps^{\frac{r}{r+1}}.
\]
Therefore, by Lemma \ref{lem:boosting}, we obtain an $(32\eps, \delta)$-multi-distribution learner and its sample complexity equals
\begin{align*}
T (m_1+m_2+km_3+ m) = &~ O\left(\log(k)\alpha^{2}\eps^{-2} \cdot \frac{(k+d)(\log(k))^{2r}\log(kd/\eps\delta)}{\eps^{2}}\right)\\
= &~ O\left(\frac{(k+d)(\log(k))^{2r+2}\log(kd/\eps\delta)}{\eps^{2(1+\frac{1}{r+1})}}\right).
\end{align*}
This completes the proof.
\end{proof}


The algorithm described in lemma \ref{lem:recursive} still requires the prior knowledge of $\OPT$. Next, we give a way of removing this prior knowledge. 
\begin{lemma}[Remove prior knowledge of $\OPT$]
\label{lem:opt}
For any $\kappa \geq 2$, suppose there exists an algorithm that receives $\OPT' \in [\OPT-\eps, \OPT+\eps]$, returns a hypothesis of error at most $32\eps$ and has sample complexity $g(k, d, \delta)\eps^{-\kappa}$. 
Then there is an algorithm of sample complexity $g(k, d, \eps^2\delta/80)\cdot \eps^{-\kappa}\log(1/\eps)$ and returns a hypothesis of error at most $33\eps$. Here $g(k, d, \delta)$ is a function of $k, d, \delta$.
\end{lemma}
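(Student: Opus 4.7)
The plan is to remove the $\OPT$-dependence by a binary-search-style algorithm over the unknown value of $\OPT$, invoking the given oracle $\mathcal A$ once per search step and verifying each output through a cheap Chernoff-Hoeffding estimate of its maximum population loss.

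Concretely, I would maintain an interval $[L,R]$ intended to contain $\OPT$, initialized to $[0,1]$, and perform $T=\Theta(\log(1/\eps))$ rounds. In round $t$ I set $\OPT'_t=(L+R)/2$, invoke $\mathcal A(\OPT'_t)$ with failure probability $\delta'=\eps^2\delta/80$ to obtain $f_t$, and draw $m=O(\log(k/\delta')/\eps^2)$ fresh samples from each $\mathcal D_i$ to form $v_t=\max_i \hat\ell_{\mathcal D_i}(f_t)$ accurate to $\pm\eps$. If $v_t\leq\OPT'_t+34\eps$ I update $R\leftarrow v_t+\eps$ (valid since $v_t+\eps\geq \max_i\ell_{\mathcal D_i}(f_t)\geq \OPT$) and record $f_t$ as a candidate; otherwise, by the contrapositive of the lemma's hypothesis (which would force $v_t\leq \OPT+33\eps\leq \OPT'_t+34\eps$ whenever $\OPT'_t\in[\OPT-\eps,\OPT+\eps]$), I update $L\leftarrow \OPT'_t-\eps$. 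At the end I return the recorded candidate with smallest $v_t$.

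Since $R-L$ halves each round (up to $O(\eps)$ slack), after $T$ rounds the interval has width $O(\eps)$, so some $\OPT'_{t^\ast}$ necessarily lies in $[\OPT-\eps,\OPT+\eps]$. For that round the lemma's hypothesis gives $\max_i\ell_{\mathcal D_i}(f_{t^\ast})\leq \OPT+32\eps$ and hence $v_{t^\ast}\leq \OPT+33\eps$, so the returned best candidate has true max-loss at most $\OPT+33\eps$. The sample cost is $T(g(k,d,\delta')\eps^{-\kappa}+km)=O(\log(1/\eps))\cdot g(k,d,\eps^2\delta/80)\eps^{-\kappa}$; since $\kappa\geq 2$ the oracle cost dominates verification, and the factor $\eps^2\delta/80$ absorbs a union bound over the $T$ oracle calls and the $O(Tk)$ Chernoff checks.

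The main obstacle is that the lemma gives no guarantee on $\mathcal A$'s output when $\OPT'_t\notin[\OPT-\eps,\OPT+\eps]$: when $\OPT'_t>\OPT+\eps$ an adversarial $\mathcal A$ could return $f_t$ with $v_t>\OPT'_t+34\eps$, causing an erroneous $L$-update that breaks the invariant $\OPT\in[L,R]$. I plan to handle this either by appealing to the monotonicity of the specific algorithm from Lemma~\ref{lem:recursive} (whose output has error $\leq \OPT'+C\eps$ whenever $\OPT'\geq \OPT-\eps$, making the $L$-update always correct), or for the fully black-box setting by augmenting the search with an $O(1)$-size fine sweep of $\OPT'$-values spaced by $\eps$ around the final midpoint, which adds only a constant factor to the round count and guarantees that some round lands in $[\OPT-\eps,\OPT+\eps]$.
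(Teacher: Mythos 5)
Your approach (binary search over $\OPT'$ with a per-round Chernoff test) is genuinely different from the paper's, and you have correctly identified the key gap yourself: the lemma provides no guarantee on $\mathcal{A}$'s output when $\OPT'_t > \OPT + \eps$, so the $L$-update $L \leftarrow \OPT'_t - \eps$ can overshoot $\OPT$ and break the invariant $\OPT \in [L,R]$. Neither of your proposed workarounds closes this gap as stated. Workaround 2 (a fine sweep around the final midpoint) fails because once the invariant breaks the midpoint can drift arbitrarily far from $\OPT$; indeed, after a bad $L$-update you may end with $L > R$ and no information about $\OPT$'s location, since the $R$-updates only guarantee $R \geq \OPT$ but not the converse. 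Workaround 1 (monotonicity of the recursive algorithm from Lemma~\ref{lem:recursive}) is a property the paper neither states nor needs, and verifying it is nontrivial: when $\OPT' \gg \OPT$, the soundness of $\textsc{Filter}$ survives (so $\max_i \ell_{\D_i}(f_t) \leq \OPT + \alpha$ still holds), but the completeness bound degrades to $\OPT' + O(\eps)$, the truncation at $\OPT' - \alpha$ can flatten the loss vector and render the regret argument vacuous, and one only obtains an error bound of $\OPT + \alpha = \OPT + 32\eps'$ (the coarse scale), not $\OPT' + O(\eps)$ as you'd need the test $v_t \leq \OPT'_t + 34\eps$ to pass. So as written, the binary-search route leaves a genuine hole that would require substantial new analysis to patch.

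The paper avoids this difficulty with a structurally different idea. Rather than doing a comparison-based search (which is fragile because out-of-range queries give no reliable signal), it does a recursive grid search: at level $r+1$ it first runs the level-$r$ algorithm at a coarser scale $\eps'$ to produce a hypothesis $f'$, then uses $\max_i \hat\ell_{\D_i}(f')$ as an anchor and tries $\OPT'$ at all $O(\eps'/\eps)$ grid points below it. The crucial one-sided property is that $\max_i \ell_{\D_i}(f') \geq \OPT$ holds unconditionally (it is the loss of some hypothesis), so the anchored grid always covers $\OPT$ whenever $f'$ has error at most $33\eps'$ above $\OPT$; no correctness claim about $\mathcal{A}$ on out-of-range $\OPT'$ is ever needed, because out-of-range threads are simply discarded by taking the empirical minimum over all candidates. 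The $\log(1/\eps)$ overhead comes from unwinding this recursion over $r = \log(1/\eps)$ levels, whereas in your approach it comes from the number of binary-search rounds. To make your approach rigorous you would essentially have to prove and then carry through a one-sided robustness guarantee for $\mathcal{A}$ on overestimates of $\OPT$, which is exactly what the paper's anchored-grid construction lets it sidestep.
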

\begin{proof}
We prove the following claim by induction: For any $r \geq 1$, let $\delta_{r} = \eps\delta/ 2^r 80$, there is an algorithm that draws
$
O\left(g(k, d, \delta_r) \cdot 40r \cdot \eps^{-\kappa - \frac{1}{\kappa^{r-1}}}\right)
$
samples and obtains a hypothesis $f$ such that $\max_{i \in [k]}\ell_{\D_i}(f) \leq \OPT + 33\eps$, without knowing $\OPT$.

Let $\ALG$ be the input algorithm that requires prior knowledge of $\OPT$. For the base case $r = 1$, we instantiate $B = 1/\eps$ threads of $\ALG$, with $\OPT' = b \cdot \eps$ ($b \in [B]$), and obtain $\{f_{b}\}_{b \in [B]}$. 
We select the best hypothesis among $\{f_b\}_{b \in [B]}$, by drawing $O(\log(k/\eps\delta)/\eps^2)$ samples from each distribution and estimating the empirical loss of $\{f_b\}_{b\in [B]}$. The output hypothesis $f$ satisfies
\[
\max_{i\in [k]}\ell_{\D_i}(f) \leq \min_{b \in [B]}\max_{i\in [k]}\ell_{\D_i}(f_b) + \eps \leq \OPT + 33\eps.
\]
since one of the guess $\OPT'$ has error at most $\eps$. The sample complexity equals 
$g(k,d, \eps\delta/2)\eps^{-\kappa-1} + O(k\log(k/\eps\delta)/\eps^2) \leq  g(k,d, \delta_1)\eps^{-\kappa-1}$.

Suppose the claim continues to hold for $r$, then for $r+1$, the algorithm first runs the level $r$ algorithm with error parameter $\eps' = \eps^{(\kappa +\frac{1}{\kappa^{r}})/(\kappa +\frac{1}{\kappa^{r-1}})}$. In particular, it draws
\[
n_1 = g(k, d, \delta_r/2) \cdot 40r \cdot (\eps')^{-\kappa - \frac{1}{\kappa^{r-1}}} = g(k, d, \delta_{r+1}) \cdot 40r \cdot \eps^{-\kappa - \frac{1}{\kappa^{r}}}
\]
samples and obtains a hypothesis $f'$ of error at most $33\eps'$. It then draws $n_2 = O(\log(k/\eps\delta)/\eps^2)$ samples from each distribution and estimates the empirical loss $\hat{\ell}_{\D_i}(f')$ of $f'$ on each distribution $\D_i$ ($i\in [n]$).
Next, it instantiates $B = 33\eps' / \eps$ threads of $\ALG$, with $\OPT' = \max_{i\in [k]}\hat{\ell}_{\D_i}(f') - b \eps$ ($b\in [B]$), and obtains $\{f_{b}\}_{b \in [B]}$.  The number of samples taken in this step equals
\begin{align*}
n_3 =&~ 33 (\eps'/\eps) \cdot g(k, d, \eps\delta/80) \cdot \eps^{-\kappa} = 33 g(k,d,\delta_1) \cdot \eps^{-\kappa - 1 +  (\kappa +\frac{1}{\kappa^{r}})/(\kappa +\frac{1}{\kappa^{r-1}})} \\
\leq &~ 33 g(k,d,\delta_{r+1})\cdot \eps^{-\kappa -\frac{1}{\kappa^r}}.
\end{align*}
The final output $f$ is the best hypothesis among $f'$ and $\{f_b\}_{b\in [B]}$, measured with their empirical loss. 
The sample complexity of the algorithm equals 
\begin{align}
n_1 + k n_2 + n_3 \leq g(k, d, \delta_{r+1}) \cdot 40(r+1)\cdot \eps^{-\kappa - \frac{1}{\kappa^{r}}}.\label{eq:opt1}
\end{align}
For the output hypothesis $f$, if $\max_{i\in [k]}\ell_{\D_i}(f') \leq \OPT + 30\eps$, then we have
\begin{align}
\max_{i \in [k]}\ell_{\D_i}(f) \leq \max_{i\in [k]}\ell_{\D_i}(f')+ \eps \leq \OPT+31\eps\label{eq:opt2}.
\end{align}
Otherwise, if $\max_{i\in [k]}\ell_{\D_i}(f') \geq \OPT + 30\eps$, the one of the guess $\{\max_{i\in [k]}\hat{\ell}_{\D_i}(f') - b \eps\}_{b \in [B]}$ of $\OPT'$ is $\eps$-close to $\OPT$, and therefore, we have
\begin{align}
\max_{i \in [k]}\ell_{\D_i}(f) \leq \min_{b\in [B]}\max_{i\in [k]}\ell_{\D_i}(f_b) + \eps \leq \OPT + 33\eps  \label{eq:opt3}
\end{align}
where the last step follows from the guarantee of $\ALG$. Combining Eq.~\eqref{eq:opt1}\eqref{eq:opt2}\eqref{eq:opt3}, we complete the induction. Taking $r = \log(1/\eps)$, we finish the proof.
\end{proof}

Combining Lemma \ref{lem:opt} and Lemma \ref{lem:recursive} (taking $r = \omega(1)$) we complete the proof of Theorem \ref{thm:multi}.

\newpage
\bibliographystyle{alpha}
\bibliography{ref}

\end{document}